\documentclass{article}

\usepackage[preprint]{neurips_2026}

\usepackage[utf8]{inputenc} 
\usepackage[T1]{fontenc}    
\usepackage{hyperref}       
\usepackage{url}            
\usepackage{booktabs}       
\usepackage{amsfonts}       
\usepackage{nicefrac}       
\usepackage{microtype}      
\usepackage[table]{xcolor}
\usepackage[most]{tcolorbox}
\usepackage{xcolor}
\usepackage{multirow}
\usepackage{multicol}
\usepackage{amssymb}
\usepackage{enumitem}
\usepackage{amsmath}
\usepackage{amsthm}
\usepackage{graphicx}
\usepackage{bbding}
\usepackage{pifont}
\usepackage{makecell}

\usepackage{float}
\usepackage{algorithm}
\usepackage{algpseudocode}

\newtheorem{theorem}{Theorem}[section]
\newtheorem{proposition}[theorem]{Proposition}

\newtheorem{assumption}[theorem]{Assumption}

\definecolor{stringcolor}{RGB}{163, 21, 21}

\newtcblisting{promptbox}[1][]{
    enhanced,
    title={#1},
    colframe=black!60,
    colback=gray!5,
    colbacktitle=black!60,
    coltitle=white,
    fonttitle=\bfseries\large,
    boxrule=3pt,
    arc=1mm,
    breakable,
    top=3mm, bottom=3mm, left=3mm, right=3mm,
    listing only,
    listing options={
        basicstyle=\small\ttfamily,
        breaklines=true,
        breakatwhitespace=true,
        columns=fullflexible,
        keepspaces=true,
        frame=none,
        aboveskip=0pt,
        belowskip=0pt,
        breakindent=0pt,
        xleftmargin=0pt,
        postbreak=\mbox{\textcolor{gray}{$\hookrightarrow$}\space},
    },
    #1
}

\newtcblisting{promptboxx}[1][]{
    enhanced,
    title={#1},
    colframe=black!60,
    colback=gray!5,
    colbacktitle=black!60,
    coltitle=white,
    fonttitle=\bfseries\large,
    boxrule=3pt,
    arc=1mm,
    breakable,
    top=3mm, bottom=3mm, left=3mm, right=3mm,
    listing only,
    listing options={
        basicstyle=\small\ttfamily,
        breaklines=true,
        breakatwhitespace=true,
        columns=fullflexible,
        keepspaces=true,
        frame=none,
        aboveskip=0pt,
        belowskip=0pt,
        breakindent=0pt,
        xleftmargin=0pt,
        postbreak=\mbox{\textcolor{gray}{$\hookrightarrow$}\space},
    },
    #1
}

\newtcblisting{promptboxxx}[1][]{
    enhanced,
    title={#1},
    colframe=black!60,
    colback=gray!5,
    colbacktitle=black!60,
    coltitle=white,
    fonttitle=\bfseries\large,
    boxrule=3pt,
    arc=1mm,
    breakable,
    top=3mm, bottom=3mm, left=3mm, right=3mm,
    listing only,
    listing options={
        basicstyle=\small\ttfamily,
        breaklines=true,
        breakatwhitespace=true,
        columns=fullflexible,
        keepspaces=true,
        frame=none,
        aboveskip=0pt,
        belowskip=0pt,
        breakindent=0pt,
        xleftmargin=0pt,
        postbreak=\mbox{\textcolor{gray}{$\hookrightarrow$}\space},
        moredelim={[is][\bfseries]{|}{|}},
        escapeinside={(*}{*)}
    },
    #1
}

\title{UPA: Unsupervised Prompt Agent via Tree-Based Search and Selection}

\author{%
  \textbf{Siran Peng}\textsuperscript{1,2,$*$}\quad
  \textbf{Weisong Zhao}\textsuperscript{4,$*$}\quad
  \textbf{Tianyu Fu}\textsuperscript{3,$*$}\quad
  \textbf{Chenxu Zhao}\textsuperscript{3}\quad
  \textbf{Tianshuo Zhang}\textsuperscript{2,1} \\
  \textbf{Haoyuan Zhang}\textsuperscript{2,1}\quad
  \textbf{Xiangyu Zhu}\textsuperscript{1,2}\quad
  \textbf{Minghui Wu}\textsuperscript{3,\dag}\quad
  \textbf{Zhen Lei}\textsuperscript{1,2,5,\dag} \\
  \textnormal{\footnotesize \textsuperscript{1}MAIS, CASIA \quad \textsuperscript{2}SAI, UCAS \quad \textsuperscript{3}Mininglamp Technology \quad \textsuperscript{4}IIE, CAS \quad \textsuperscript{5}SCSE, FIE, M.U.S.T} \\
  \texttt{\footnotesize \textnormal{\{pengsiran2023, zhen.lei\}@ia.ac.cn, zhaoweisong@iie.ac.cn, \{futianyu, wuminghui\}@mininglamp.com}} \\
  \textnormal{\footnotesize \textsuperscript{$*$}Equal contribution. \quad \textsuperscript{\dag}Corresponding author.}
}

\begin{document}

\maketitle

\begin{abstract}
Prompt agents have recently emerged as a promising paradigm for automated prompt optimization, framing prompt discovery as a sequential decision-making problem over a structured prompt space. While this formulation enables the use of advanced planning algorithms, these methods typically assume access to supervised reward signals, which are often unavailable in practical scenarios. In this work, we propose UPA, an Unsupervised Prompt Agent that realizes structured search and selection without relying on ground-truth (GT) rewards. Specifically, during search, UPA iteratively constructs an evolving tree structure to navigate the prompt space, guided by fine-grained and position-debiased pairwise comparisons from Large Language Models (LLMs). Crucially, as these local comparisons do not inherently yield a consistent global scale, we decouple systematic prompt exploration from final selection, introducing a two-stage framework grounded in the Bradley-Terry-Luce (BTL) model. This framework first performs path-wise Bayesian aggregation of local comparisons to filter candidates under uncertainty, followed by global tournament-style comparisons to infer latent prompt quality and identify the optimal prompt. Experiments across multiple tasks demonstrate that UPA consistently outperforms existing prompt optimization methods, showing that agent-style optimization can remain highly effective even in unsupervised settings.
\end{abstract}

\section{Introduction}
Well-designed prompts are critical for maximizing the capabilities of Large Language Models (LLMs) \cite{deng2023rephrase,zhengtake}. However, crafting effective prompts often requires substantial trial-and-error and deep task-specific expertise. To alleviate this burden, researchers have developed a wide range of automated prompt optimization methods that leverage LLMs themselves to iteratively improve prompt quality \cite{pryzant2023automatic,NEURIPS2023_a0054803,yang2023large,zhou2024fairer,chen2024prompt,fernando2024promptbreeder,he2025crispo,yuksekgonul2025optimizing,yan2025efficient}. As depicted in Figure~\ref{hp}, most existing approaches perform unstructured searches. Specifically, they navigate the prompt space using either single-trajectory updates \cite{yang2023large,yuksekgonul2025optimizing,yan2025efficient} or flat exploration strategies based on candidate pools or populations \cite{pryzant2023automatic,chen2024prompt,fernando2024promptbreeder}. In contrast, prompt agents have recently emerged as a promising paradigm that frames the optimization process as a sequential decision-making problem over a structured prompt space \cite{wangpromptagent,yu2025optimizing}. By maintaining explicit tree topologies and exploring diverse search trajectories, prompt agents enable systematic planning and backtracking, ultimately facilitating more robust and adaptive prompt discovery.

\begin{figure}[t]
	\begin{center}
		\begin{minipage}{1\linewidth}
			{\includegraphics[width=1\linewidth]{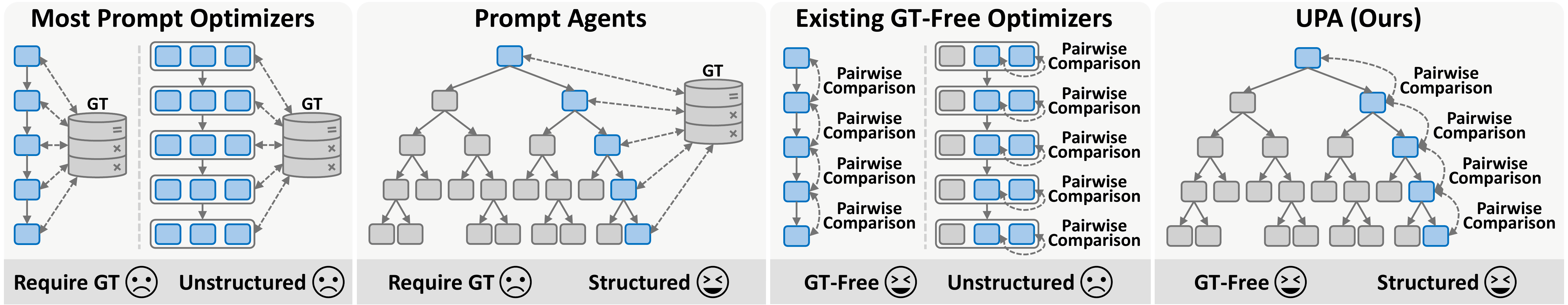}}
			\centering
		\end{minipage}
	\end{center}
    \caption{Comparison of prompt optimization paradigms, including most prompt optimizers, existing prompt agents \cite{wangpromptagent,yu2025optimizing}, existing GT-free optimizers \cite{xiang2025self,nair2025tournament,wu2025llm}, and the proposed UPA. Unlike prior methods, UPA uniquely achieves structured exploration in a fully GT-free setting. \label{hp}}
\end{figure}

Despite the success of automated prompt optimization, most existing methods rely heavily on ground-truth (GT) reward signals, which are typically derived from labeled data or task-specific metrics. Such signals are often unavailable in practical scenarios, particularly for open-ended or user-specific tasks where prompt quality is inherently difficult to quantify. Recent GT-free optimizers address this limitation by replacing absolute rewards with LLM-based pairwise comparisons: SPO \cite{xiang2025self} performs reference-free single-path refinement, while DEEVO \cite{nair2025tournament} and PDO \cite{wu2025llm} select and mutate prompts within flat candidate sets. While these methods demonstrate that relative preferences can effectively guide unstructured prompt search, it remains unclear whether such locally defined feedback can sustain multi-path exploration and cross-branch selection in tree-structured prompt agents.

Motivated by these observations, we ask a fundamental question: \textit{Can prompt agents be realized in fully GT-free settings?} To address this question, we propose UPA, an Unsupervised Prompt Agent that enables structured search and selection over the prompt space without relying on GT rewards. Specifically, during the search phase, UPA iteratively constructs an evolving tree structure to navigate the candidate space, where each node denotes a candidate prompt and each edge corresponds to a refinement step performed by an optimization LLM. This structure allows UPA to preserve explicit parent-child refinement paths, enabling path-aware exploration beyond unstructured search. Instead of computing absolute rewards, UPA utilizes a judge LLM to conduct fine-grained and position-debiased pairwise comparisons, providing relative preferences between a child prompt and its parent based on their performance on sampled inputs. Crucially, since these local comparisons do not inherently yield a consistent global ranking, UPA decouples prompt exploration from final selection. This necessitates a two-stage selection procedure grounded in the Bradley-Terry-Luce (BTL) model \cite{bradley1952rank,luce1959individual}. In the first stage, UPA performs path-wise Bayesian aggregation of noisy and sparse comparison outcomes to obtain uncertainty-aware estimates of prompt quality, filtering the tree to a promising subset of candidates. In the second stage, UPA conducts global tournament-style comparisons among these candidates, estimating their latent quality scores via BTL-based maximum likelihood estimation to identify the optimal prompt. In conclusion, the main contributions of this paper are as follows:
\begin{itemize}[leftmargin=12pt]
\item We propose UPA, an Unsupervised Prompt Agent that realizes structured search and selection over the prompt space without relying on GT rewards. This addresses a critical limitation in existing agent-style prompt optimization, which strictly requires absolute rewards to guide exploration.
\item We develop a tree-based search procedure that systematically explores the candidate space via fine-grained and position-debiased pairwise comparisons provided by a judge LLM. This is tightly coupled with a novel two-stage selection framework grounded in the BTL model, which enables robust and accurate prompt identification under noisy and sparse relative feedback conditions.
\item Through comprehensive experiments across diverse closed-ended and open-ended tasks, we demonstrate that UPA consistently outperforms state-of-the-art (SOTA) prompt optimizers, showing that agent-style prompt optimization can remain highly effective even in fully GT-free settings.
\end{itemize}

\section{Methodology}
\label{method}
\subsection{Problem Setting \& Overview}
Let $\mathcal{P}$ denote the discrete space of natural language prompts and $\mathcal{Q}$ a distribution over task queries. Given a prompt $p\in\mathcal{P}$ and a query $q\sim\mathcal{Q}$, an execution LLM produces a response $a = f_{\text{exec}}(p, q)$. The goal of prompt optimization is to identify an optimal prompt $p^\star\in\mathcal{P}$ that maximizes the expected response quality. In this work, we consider an unsupervised setting where no GT reward function is available. Instead, supervision is provided exclusively through noisy pairwise comparisons between responses. As a result, existing prompt agents that rely on supervised rewards are not applicable.

To address this challenge, we propose \textbf{UPA}, an unsupervised prompt agent guided by fine-grained and position-debiased pairwise comparisons from a judge LLM, $f_{\text{judge}}$ (Sec.~\ref{sec:pairwise}). As shown in Figure~\ref{pipeline}, UPA decouples prompt optimization into two phases: \textbf{search} and \textbf{selection}. During the search phase, UPA explores the prompt space via a tree-based strategy to generate candidates and accumulate sparse pairwise comparison data (Sec.~\ref{sec:search}). In the selection phase, to identify the optimal prompt $p^\star$ from these noisy observations, UPA employs a two-stage framework grounded in the BTL model (Sec.~\ref{sec:btl}). Specifically, path-wise Bayesian filtering is first applied to prune the search tree to the top-$K$ candidates (Sec.~\ref{sec:stage1}), followed by global BTL maximization to infer the final prompt (Sec.~\ref{sec:stage2}). The complete procedures for the search and selection phases are summarized in Algorithms~\ref{alg:upa-search} and~\ref{alg:upa-select}.

\begin{figure}[t]
	\begin{center}
		\begin{minipage}{1\linewidth}
			{\includegraphics[width=0.88\linewidth]{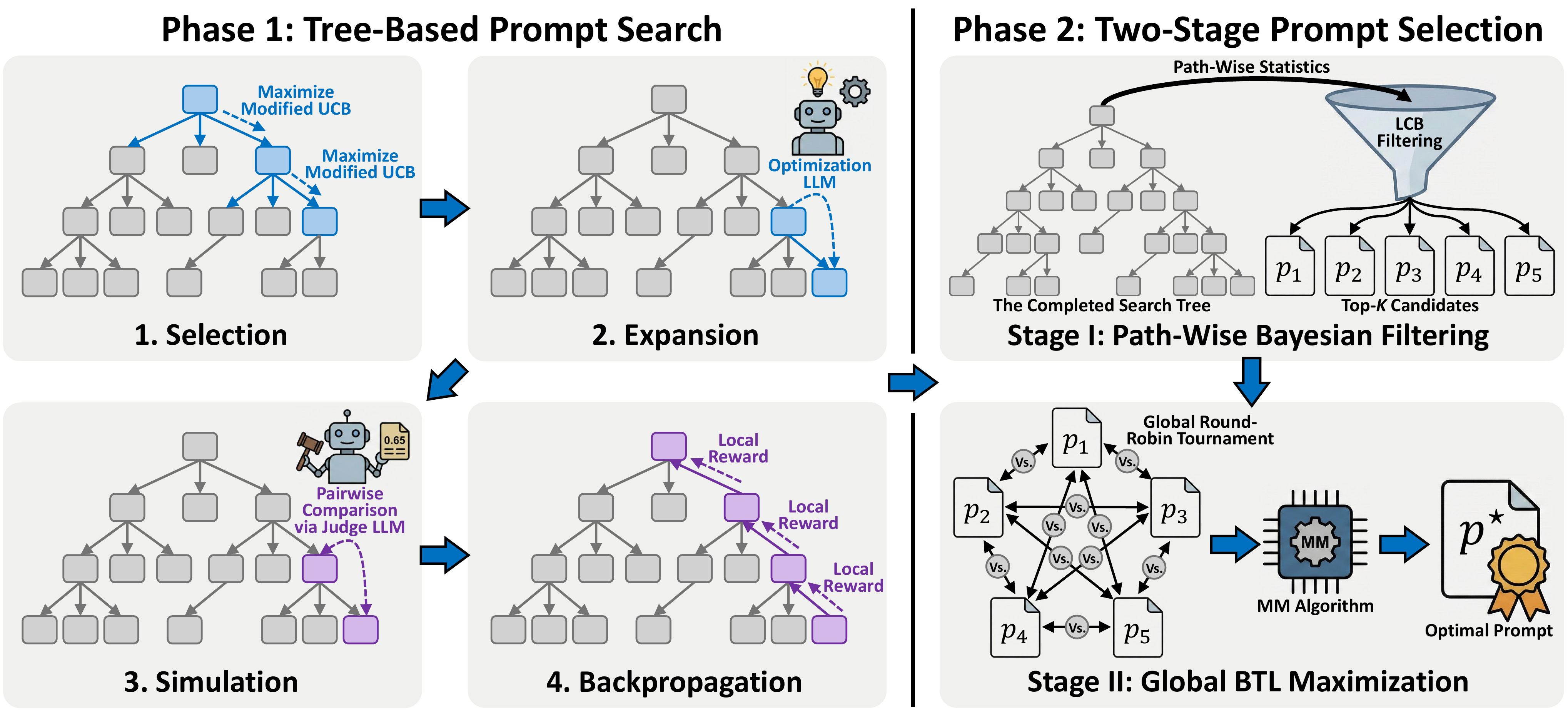}}
			\centering
		\end{minipage}
	\end{center}
	\caption{Overview of UPA. Our method decouples prompt optimization into two phases: search and selection. During search, UPA explores the prompt space using a tree-based framework driven by relative feedback. In the selection phase, it employs a two-stage strategy: path-wise Bayesian filtering to prune the search tree, followed by global BTL maximization to infer the optimal prompt. \label{pipeline}}
\end{figure}

\subsection{Fine-Grained and Position-Debiased Pairwise Comparison}
\label{sec:pairwise}
Given two prompts $p_i, p_j \in \mathcal{P}$ and a query $q$, the execution LLM generates responses $a_i=f_{\text{exec}}(p_i,q)$ and $a_j=f_{\text{exec}}(p_j,q)$. The judge LLM $f_{\text{judge}}$ then evaluates their relative quality as follows: 

\textbf{Five-Point Likert Scale.}
We quantify pairwise preferences using a discrete Likert scale $\mathcal{S} = \{1, \dots, 5\}$, following the common practice of eliciting numerical judgments from LLM evaluators \cite{NEURIPS2023_91f18a12}. Specifically, 5 signifies a strong preference for the first response, 1 denotes a strong preference for the second, and 3 represents a tie. Compared with coarse binary decisions, this scale retains fine-grained preference intensity by distinguishing strong preferences, slight preferences, and ties.

\textbf{Positional Bias Mitigation.}
LLMs exhibit inherent positional bias, often favoring the first-presented option regardless of response quality. To mitigate this bias, we evaluate each candidate pair in both presentation orders. This yields a forward score $s_{i,j} = f_{\text{judge}}(a_i, a_j, q)\in \mathcal{S}$ and a reverse score $s_{j,i} = f_{\text{judge}}(a_j, a_i, q)\in \mathcal{S}$. Since the Likert scale is symmetric, the reverse score $s_{j,i}$ is converted to $6 - s_{j,i}$ from the perspective of $a_i$. Thus, the debiased preference score $\tilde{s}_{i,j}^{(q)}$ is computed as:
\begin{equation}
\label{eq:debias}
    \tilde{s}_{i,j}^{(q)} = \frac{1}{2} \left( f_{\text{judge}}(a_i, a_j, q) + \left[ 6 - f_{\text{judge}}(a_j, a_i, q) \right] \right).
\end{equation}

\textbf{Normalization and Fractional-Response Modeling.}
To aggregate fine-grained preferences across queries, we first linearly rescale the debiased score to a continuous soft-win signal on the unit interval:
\begin{equation}
\label{eq:normalize}
y_{i,j}^{(q)} = \frac{\tilde{s}_{i,j}^{(q)} - 1}{4} \in [0, 1].
\end{equation}
This min-max normalization follows the common practice of rescaling rating-scale responses to a shared numerical range for aggregation and comparison, while treating underlying scale levels as approximately comparable increments of preference intensity \cite{dawes2008data,haslbeck2025comparing}. After normalization, we treat $y_{i,j}^{(q)}$ as a bounded fractional preference response, consistent with fractional-response modeling for variables in $[0,1]$ \cite{papke1996econometric}. Under this interpretation, each comparison contributes $y_{i,j}^{(q)}$ effective wins and $1-y_{i,j}^{(q)}$ effective losses. Over $n_{i,j}$ trials, the cumulative effective wins $w_{i,j}$ and losses $l_{i,j}$ are:
\begin{equation}
\label{eq:wins}
    w_{i,j} = \sum_{k=1}^{n_{i,j}} y_{i,j}^{(q_k)}, \quad l_{i,j} = n_{i,j} - w_{i,j}.
\end{equation}
These effective counts preserve the judge's fine-grained preference intensity and serve as fractional observations for the Beta-prior updates used in the Bayesian filtering procedure in Sec.~\ref{sec:stage1}.

\begin{figure}[t]
    \begin{minipage}[t]{0.49\textwidth}
        \begin{algorithm}[H]
        \footnotesize
            \caption{Tree-Based Prompt Search}
            \label{alg:upa-search}
            \begin{algorithmic}[1]
            \Require Initial prompt $p_o$, query pool $\mathcal{Q}$, budget $T$
            \Ensure Search tree $\mathcal{T}$
            \State Initialize root $o$ with $p_o$, $N(o) \leftarrow 0$, $W(o) \leftarrow 0$
            \For{$t = 1$ to $T$}
                \State \textcolor{blue}{\textsc{// 1. Selection}}
                \State $v \leftarrow o$
                \While{$|\text{children}(v)| = M$}
                    \State $v \leftarrow \arg\max_{k \in \text{children}(v)} \mathrm{UCB}(k,v)$
                \EndWhile

                \State \textcolor{blue}{\textsc{// 2. Expansion}}
                \If{$|\text{children}(v)| < M$}
                    \State Sample $\mathcal{Q}_{\text{exp}} \subset \mathcal{Q}$, execute to get outputs $\mathcal{A}_{\text{exp}}$
                    \State $p_{v'} \leftarrow f_{\text{opt}}(p_v, \mathcal{Q}_{\text{exp}}, \mathcal{A}_{\text{exp}})$
                    \State Instantiate $v'$ with $p_{v'}$ as a child node of $v$
                \EndIf
                
                \State \textcolor{blue}{\textsc{// 3. Simulation}}
                \State Sample $\mathcal{Q}_{\text{sim}} \subset \mathcal{Q}$
                \State Compute soft-wins $y_{v',v}^{(q)}$ via Eq.~(\ref{eq:normalize}) for all $q \in \mathcal{Q}_{\text{sim}}$
                \State Aggregate: $w_{v',v} \leftarrow \sum_{q} y_{v',v}^{(q)}, \quad n_{v',v} \leftarrow |\mathcal{Q}_{\text{sim}}|$
                \State Compute local reward: $R(v') \leftarrow w_{v',v} / n_{v',v}$
                
                \State \textcolor{blue}{\textsc{// 4. Backpropagation}}
                \State $k \leftarrow v'$
                \While{$k \neq \text{Null}$}
                    \State $N(k) \leftarrow N(k) + 1$
                    \State $W(k) \leftarrow W(k) + R(v')$
                    \State $Q(k) \leftarrow W(k) / N(k)$
                    \State $k \leftarrow \text{parent}(k)$
                \EndWhile
            \EndFor
            \State \Return $\mathcal{T}$
            \end{algorithmic}
        \end{algorithm}
    \end{minipage}
    \hfill
    \begin{minipage}[t]{0.49\textwidth}
        \begin{algorithm}[H]
        \footnotesize
            \caption{Two-Stage Prompt Selection}
            \label{alg:upa-select}
            \begin{algorithmic}[1]
            \Require Search tree $\mathcal{T}$, selection set $\mathcal{Q}_{\text{sel}}\subset \mathcal{Q}$
            \Ensure Final optimized prompt $p^\star$
            \State \textcolor{blue}{\textsc{// Stage I: Path-wise Bayesian Filtering}}
            \ForAll{node $v \in \mathcal{T}$}
                \State Initialize $\mu_v \leftarrow 0$, $\sigma_v^2 \leftarrow 0$
                \ForAll{edge $(u, k)$ in path $\mathcal{P}_{o \to v}$}
                    \State $\alpha \leftarrow \alpha_0 + w_{k,u}, \quad \beta \leftarrow \beta_0 + n_{k,u} - w_{k,u}$
                    \State $\mu_{k,u}^{\Delta} \leftarrow \psi(\alpha) - \psi(\beta)$
                    \State $(\sigma_{k,u}^{\Delta})^2 \leftarrow \psi_1(\alpha) + \psi_1(\beta)$
                    \State $\mu_v \leftarrow \mu_v + \mu_{k,u}^{\Delta}, \quad \sigma_v^2 \leftarrow \sigma_v^2 + (\sigma_{k,u}^{\Delta})^2$
                \EndFor
                \State $\mathrm{LCB}(v) \leftarrow \mu_v - \lambda_{\text{unc}} \sqrt{\sigma_v^2}$
            \EndFor
            \State $\mathcal{C} \leftarrow \text{Top-}K \text{ nodes based on } \mathrm{LCB}(v)$
            
            \State \textcolor{blue}{\textsc{// Stage II: Global BTL Maximization}}
            \State Initialize $W \in \mathbb{R}^{K \times K}$ with zeros
            \ForAll{distinct pair $p_i, p_j \in \mathcal{C}$}
                \State Set total trials: $N_{i,j} \leftarrow |\mathcal{Q}_{\text{sel}}|$
                \State Compute effective wins $W_{i,j}$ on $\mathcal{Q}_{\text{sel}}$ via Eq.~(\ref{eq:wins})
                \State Derive opponent wins: $W_{j,i} \leftarrow N_{i,j} - W_{i,j}$
            \EndFor
            \State Initialize $\boldsymbol{\gamma}^{(0)} \leftarrow \mathbf{1}$
            \Repeat \Comment{MM Algorithm}
                \ForAll{$i \in \{1, \dots, K\}$}
                    \State $\gamma_i^{(t+1)} \leftarrow \frac{\sum_{j \neq i} W_{i,j}}{\sum_{j \neq i} N_{i,j}/(\gamma_i^{(t)} + \gamma_j^{(t)})}$
                \EndFor
            \Until{Convergence}
            \State $p^\star \leftarrow \arg\max_{p_i \in \mathcal{C}} \gamma_i$
            \State \Return $p^\star$
            \end{algorithmic}
        \end{algorithm}
    \end{minipage}
\end{figure}

\subsection{Tree-Based Search with Relative Feedback}
\label{sec:search}
UPA maintains a search tree $\mathcal{T}$, where each node $v$ corresponds to a prompt $p_v$ and each edge represents a refinement operation. Beginning with an initial prompt $p_o$, the tree is expanded iteratively to explore the vast and discrete prompt space. To steer this exploration, we employ a modified Monte Carlo Tree Search (MCTS) framework \cite{coulom2006efficient} tailored to the absence of GT reward signals. Instead of relying on absolute rewards, the search is driven by relative feedback obtained through pairwise comparisons between prompts. Each search iteration proceeds through the following steps:

\textbf{Selection.} 
Starting from the root $o$, the algorithm follows a root-to-node selection path to identify a node for expansion. Along this path, if a node $v$ is not fully expanded (i.e., the number of its children is less than the maximum branching factor $M$), the selection process terminates, and $v$ is chosen. Otherwise, if $v$ is fully expanded, the algorithm selects the child node that maximizes a modified Upper Confidence Bound (UCB) score and continues along the path. This strategy ensures a balance between deepening the search in promising branches and widening the search at nodes that have not yet been fully expanded. For a node $v$ with parent $u$, its modified UCB score is defined as follows:
\begin{equation}
\label{eq:ucb}
    \mathrm{UCB}(v,u) = Q(v) + c_{\text{puct}} \sqrt{\frac{\ln N(u)}{N(v)}} - \lambda_{\text{div}} \cdot D(v),
\end{equation}
where $Q(v)$ is the empirical mean of pairwise comparison rewards backpropagated to $v$ following descendant node expansions, $N(\cdot)$ denotes the visit count, and $c_{\text{puct}}$ controls the exploration-exploitation trade-off. To discourage repeated exploration of semantically similar branches, we introduce a diversity penalty term, $D(v)$. Let $\phi(\cdot)$ denote a pre-trained sentence embedding function. We define:
\begin{equation}
    D(v) = \max\left\{0,\frac{1}{|\mathrm{sibs}(v)|} \sum_{s \in \mathrm{sibs}(v)} \cos\big(\phi(p_v), \phi(p_s)\big)\right\},
\end{equation}
where $\mathrm{sibs}(\cdot)$ denotes sibling nodes, $\cos(\cdot,\cdot)$ denotes cosine similarity, and $\lambda_{\text{div}}$ controls the penalty.

\textbf{Expansion.} 
Upon reaching the selected node $v$, we perform a single expansion step to add a new child. We sample a query batch $\mathcal{Q}_{\text{exp}}$ and obtain the execution outputs $\mathcal{A}_{\text{exp}}$ conditioned on $p_v$. These input-output pairs are provided to an optimization LLM, $f_{\text{opt}}$, to generate a single new refinement:
\begin{equation}
    p_{v'} \leftarrow f_{\text{opt}}(p_v, \mathcal{Q}_{\text{exp}}, \mathcal{A}_{\text{exp}}).
\end{equation}
Subsequently, this generated prompt $p_{v'}$ is instantiated as a child node $v'$ of $v$.

\textbf{Simulation.}  
In standard MCTS, the value of a newly expanded node is typically estimated via rollouts to a terminal state. In GT-free prompt optimization, however, extending rollouts over multiple refinement steps offers limited additional information. Since evaluation relies exclusively on local parent-child comparisons, deeper rollouts do not introduce new supervision beyond these local relations. Instead, they consume additional evaluation budget and exacerbate estimation uncertainty under finite samples. We therefore replace traditional rollouts with a fixed-budget local relative evaluation. Specifically, the newly instantiated child node $v'$ is compared against its parent node $v$ on a batch of queries $\mathcal{Q}_{\text{sim}}$. For each query $q \in \mathcal{Q}_{\text{sim}}$, we conduct a pairwise comparison following the protocol in Sec.~\ref{sec:pairwise} to obtain a soft-win signal $y_{v',v}^{(q)}$. The local reward is then computed as follows:
\begin{equation}
    R(v') = \frac{1}{|\mathcal{Q}_{\text{sim}}|} \sum_{q \in \mathcal{Q}_{\text{sim}}} y_{v',v}^{(q)}.
\end{equation}
In addition to the local reward $R(v')$, we record the effective wins $w_{v',v}$ and trials $n_{v',v}$ on the edge $(v, v')$. These quantities serve as the sufficient statistics for path-wise Bayesian filtering in Sec.~\ref{sec:stage1}.

\textbf{Backpropagation.} 
The local reward $R(v')$ is propagated upward along the entire path from $v'$ to the root $o$ to update the corresponding search statistics. For each node $k$ on this path, we update:
\begin{equation}
    N(k) \leftarrow N(k) + 1, \quad W(k) \leftarrow W(k) + R(v').
\end{equation}
The node value is computed as the empirical mean $Q(k) = W(k) / N(k)$. This backpropagation mechanism aggregates local relative improvement signals into node-level statistics, guiding future selection toward refinement paths that have consistently yielded favorable pairwise outcomes.

\subsection{Latent Quality Modeling via BTL}
\label{sec:btl}
To compare prompts discovered across different branches of the search tree, we adopt the Bradley-Terry-Luce (BTL) model to map local relative feedback onto a common latent scale. Each prompt $p_v$ is associated with a latent quality parameter $\theta_v \in \mathbb{R}$, which captures its underlying performance across queries. Let $\pi_{v,u}$ denote the soft-win probability of prompt $p_v$ over prompt $p_u$, i.e., the expected fractional preference response induced by the pairwise comparison protocol in Sec.~\ref{sec:pairwise}. Under the BTL model, this probability is given by the sigmoid of their latent quality difference:
\begin{equation}
    \pi_{v,u} = \frac{e^{\theta_v}}{e^{\theta_v} + e^{\theta_u}} = \sigma(\theta_v - \theta_u),
\end{equation}
where $\sigma(x) = 1/(1+e^{-x})$ is the sigmoid function. Inverting this relationship, the difference in latent quality corresponds to the log-odds (logit) of the pairwise soft-win rate:
\begin{equation}
    \theta_v - \theta_u = \ln \frac{\pi_{v,u}}{1 - \pi_{v,u}} \triangleq \mathrm{logit}(\pi_{v,u}).
\end{equation}
This logit parameterization converts each local soft-win probability into an additive quality increment. Since the tree topology guarantees a unique path $\mathcal{P}_{o \to v}$ from the root $o$ to any node $v$, these increments can be telescoped along the path. Thus, the root-relative latent quality of a node decomposes as:
\begin{equation}
\label{eq:telescope}
    \theta_v - \theta_o = \sum_{(u,k) \in \mathcal{P}_{o \to v}} \underbrace{(\theta_k - \theta_u)}_{\Delta \theta_{k,u}} = \sum_{(u,k) \in \mathcal{P}_{o \to v}} \mathrm{logit}(\pi_{k,u}),
\end{equation}
where each term $(u, k)$ in the summation represents a directed refinement edge from parent $u$ to child $k$, and $\Delta \theta_{k,u}$ denotes the local quality increment associated with that refinement step.

\subsection{Stage I Selection: Path-Wise Bayesian Filtering}
\label{sec:stage1}
While Eq.~(\ref{eq:telescope}) decomposes root-relative latent quality into local increments, directly summing empirical estimates can be statistically unstable. During search, each edge $(u, v)$ is evaluated using a fixed and limited sampling budget to control computational cost, which results in high-variance estimates of the local increment $\Delta \theta_{v,u}$. When such noisy estimates are accumulated along a path, uncertainty compounds with depth, leading to poorly calibrated quality estimates for deeper nodes.

To address this, we propose a path-wise Bayesian filtering algorithm. For each edge $(u,v)$, we place a Beta prior on the unknown soft-win probability $\pi_{v,u}$. Following the fractional-response interpretation in Sec.~\ref{sec:pairwise}, the effective wins and losses $\mathcal{D}_{v,u}=\{w_{v,u},l_{v,u}\}$ serve as fractional pseudo-counts. This remains conjugate because the fractional likelihood is proportional to $\pi_{v,u}^{w_{v,u}}(1-\pi_{v,u})^{l_{v,u}}$, and the Beta distribution admits positive real-valued shape parameters. The posterior distribution is:
\begin{equation}
    \pi_{v,u} \mid \mathcal{D}_{v,u} \sim \mathrm{Beta}(\alpha_{v,u}, \beta_{v,u}),
\end{equation}
where $\alpha_{v,u} = \alpha_0 + w_{v,u}$ and $\beta_{v,u} = \beta_0 + l_{v,u}$. We adopt a non-informative uniform prior by setting $\alpha_0 = \beta_0 = 1$. With the posterior established, we characterize the local quality increment $\Delta \theta_{v,u} = \mathrm{logit}(\pi_{v,u})$ through its first two moments. Specifically, the mean $\mu_{v,u}^{\Delta}$ and variance $(\sigma_{v,u}^{\Delta})^2$ admit closed-form expressions involving the Digamma $\psi(\cdot)$ and Trigamma $\psi_1(\cdot)$ functions:
\begin{equation}
\label{eq13}
\begin{aligned}
    \mu_{v,u}^{\Delta} &= \mathbb{E}[\mathrm{logit}(\pi_{v,u})] = \psi(\alpha_{v,u}) - \psi(\beta_{v,u}), \\
    (\sigma_{v,u}^{\Delta})^2 &= \mathrm{Var}[\mathrm{logit}(\pi_{v,u})] = \psi_1(\alpha_{v,u}) + \psi_1(\beta_{v,u}).
\end{aligned}
\end{equation}

To aggregate these local estimates, we compute path-wise posterior statistics along $\mathcal{P}_{o \to v}$. By linearity of expectation, the path-wise posterior mean $\mu_v$ is the sum of local posterior means. To quantify total uncertainty, we require an assumption about correlations among edge-level estimation errors.
\begin{assumption}
\label{ass:ind}
Conditioned on the edge-level soft-win probabilities, the estimation errors of the local logit increments along a refinement path are statistically independent.
\end{assumption}
We discuss the rationale, potential violations, and empirical diagnostic of this assumption in Appendix~\ref{app:assumption_discussion}. Under Assumption~\ref{ass:ind}, the variances of the local increments along a refinement path become additive. Thus, the path-wise posterior mean $\mu_v$ and total variance $\sigma_v^2$ are as follows:
\begin{equation}
\label{eq14}
    \mu_v = \sum_{(u,k) \in \mathcal{P}_{o \to v}} \mu_{k,u}^{\Delta}, \quad \sigma_v^2 = \sum_{(u,k) \in \mathcal{P}_{o \to v}} (\sigma_{k,u}^{\Delta})^2.
\end{equation}
To conservatively identify promising prompts while accounting for accumulated uncertainty, we rank candidate nodes using an uncertainty-penalized Lower Confidence Bound (LCB) score, defined as:
\begin{equation}\label{eq15}
    \mathrm{LCB}(v) = \mu_v - \lambda_{\text{unc}} \sqrt{\sigma_v^2},
\end{equation}
where $\lambda_{\text{unc}}$ is a hyperparameter controlling risk aversion. This formulation penalizes nodes that have high posterior means but also high accumulated uncertainty. Based on this ranking, we retain the top-$K$ candidates. The following proposition characterizes the asymptotic behavior of this filtering:
\begin{proposition}
\label{proposition2.1}
Under the fractional-response BTL model, the path-wise posterior mean $\mu_v$ provides a consistent estimate of the latent quality difference $\theta_v - \theta_o$. Moreover, under Assumption~\ref{ass:ind}, for any finite path, as the sampling budget on each edge increases, the path-wise variance $\sigma_v^2$ vanishes. Consequently, the uncertainty penalty in $\mathrm{LCB}(v)$ vanishes asymptotically, and the induced ranking concentrates around the BTL latent quality ordering when the quality gaps are nonzero.
\end{proposition}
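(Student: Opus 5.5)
The plan is to argue edge by edge and then sum along the path, using that any path $\mathcal{P}_{o\to v}$ has finitely many edges, say $d_v$. Fix an edge $(u,k)$ and let $n=n_{k,u}\to\infty$. Under the fractional-response BTL model, the soft-wins $y_{k,u}^{(q)}\in[0,1]$ are i.i.d. across sampled queries with mean $\pi_{k,u}\in(0,1)$. By the strong law of large numbers, $w_{k,u}/n\to\pi_{k,u}$ and $l_{k,u}/n\to 1-\pi_{k,u}$ almost surely. Hence both shape parameters $\alpha_{k,u}=1+w_{k,u}$ and $\beta_{k,u}=1+l_{k,u}$ diverge, each at rate proportional to $n$, and $\alpha_{k,u}/\beta_{k,u}\to\pi_{k,u}/(1-\pi_{k,u})$.

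Next I would plug in the standard asymptotics of the polygamma functions: $\psi(x)=\ln x+O(1/x)$ and $\psi_1(x)=1/x+O(1/x^2)$ as $x\to\infty$. For the mean in Eq.~(\ref{eq13}),
\begin{equation*}
\mu_{k,u}^{\Delta}=\psi(\alpha_{k,u})-\psi(\beta_{k,u})=\ln\frac{\alpha_{k,u}}{\beta_{k,u}}+O(1/n)\;\longrightarrow\;\mathrm{logit}(\pi_{k,u})=\theta_k-\theta_u .
\end{equation*}
This uses the continuity of $\ln$ on $(0,\infty)$. For the variance,
\begin{equation*}
(\sigma_{k,u}^{\Delta})^2=\psi_1(\alpha_{k,u})+\psi_1(\beta_{k,u})=\frac{1}{\alpha_{k,u}}+\frac{1}{\beta_{k,u}}+O(1/n^2)=O(1/n)\to 0 .
\end{equation*}

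Summing over the $d_v<\infty$ edges via Eq.~(\ref{eq14}) and the telescoping identity Eq.~(\ref{eq:telescope}) gives $\mu_v\to\theta_v-\theta_o$, which is consistency. Since $\sigma_v^2$ is a finite sum of $O(1/n)$ terms, $\sigma_v^2\to 0$. Assumption~\ref{ass:ind} is what justifies treating $\sigma_v^2$ as the path's total posterior variance. Without it, the sum of variances need not describe the uncertainty, though each term individually still vanishes. The penalty $\lambda_{\text{unc}}\sqrt{\sigma_v^2}$ therefore vanishes too, and $\mathrm{LCB}(v)\to\theta_v-\theta_o$ almost surely.

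For the ranking claim, I take the tree as finite and fixed (or condition on it). Let $\delta=\min_{v\neq v'}|\theta_v-\theta_{v'}|>0$. Almost sure convergence then gives an $n_0$ such that every $|\mathrm{LCB}(v)-(\theta_v-\theta_o)|<\delta/2$ for all $n\ge n_0$. From that point the LCB ordering coincides with the latent ordering, so the top-$K$ set equals the true top-$K$ eventually. A quantitative rate could come from Hoeffding's inequality applied to $w_{k,u}/n$ together with a Lipschitz bound on $\mathrm{logit}$ near $\pi_{k,u}$.

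The main obstacle is the modelling step, not the analysis. I must make precise that the fractional soft-wins are i.i.d. with mean exactly $\sigma(\theta_k-\theta_u)$, which is what "under the fractional-response BTL model" has to mean. I also need $\pi_{k,u}$ strictly inside $(0,1)$, so the logit is finite and continuous there. I will state both explicitly as the model hypothesis. The remaining care is uniformity over the finitely many edges and nodes, which a union over a finite set handles.
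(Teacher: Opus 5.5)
Your proposal is correct and follows the same route as the paper's proof. Both establish edge-wise convergence of $\psi(\alpha_{k,u})-\psi(\beta_{k,u})$ to $\mathrm{logit}(\pi_{k,u})$, show that $\psi_1(\alpha_{k,u})+\psi_1(\beta_{k,u})$ vanishes because $\psi_1(z)\sim 1/z$ and $\alpha_{k,u},\beta_{k,u}$ grow linearly in $n_{k,u}$, then sum over the finite path and conclude convergence of $\mathrm{LCB}(v)$. Your version is more explicit than the paper's in several places: the strong law for $w_{k,u}/n_{k,u}$, the expansion $\psi(x)=\ln x+O(1/x)$ in place of the paper's appeal to posterior concentration, the needed condition $\pi_{k,u}\in(0,1)$, the $\delta/2$ ranking argument, and your correct remark that independence is needed only to read $\sigma_v^2$ as the true path variance, not for it to vanish.
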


\subsection{Stage II Selection: Global BTL Maximization}
\label{sec:stage2}
While Stage~I efficiently prunes the search space via path-wise inference, the resulting quality estimates are inherently path-dependent and not directly comparable across candidates from different refinement trajectories. To identify the optimal prompt $p^\star$ using path-independent evidence, we conduct a global round-robin tournament over the top-$K$ candidates, denoted as $\mathcal{C} = \{p_1, \dots, p_K\}$.

To ensure robust quality estimation for the final selection, we introduce a separately sampled selection set $\mathcal{Q}_{\text{sel}}$, where $|\mathcal{Q}_{\text{sel}}| > |\mathcal{Q}_{\text{sim}}|$. For every pair of candidates $p_i, p_j \in \mathcal{C}$, we perform pairwise comparisons on $\mathcal{Q}_{\text{sel}}$ following the protocol in Sec.~\ref{sec:pairwise}. Let $N_{i,j} = |\mathcal{Q}_{\text{sel}}|$ denote the number of effective comparison trials between $p_i$ and $p_j$, and let $W_{i,j}$ represent the cumulative effective wins of $p_i$ over $p_j$. By construction, the effective wins of $p_j$ satisfy $W_{j,i} = N_{i,j} - W_{i,j}$. We estimate the latent quality parameters $\boldsymbol{\theta} = [\theta_1, \dots, \theta_K]^\top$ by maximizing the fractional BTL log-likelihood:
\begin{equation}
    \mathcal{L}(\boldsymbol{\theta}) = \sum_{i=1}^K \sum_{j \neq i} W_{i,j} \ln \frac{e^{\theta_i}}{e^{\theta_i} + e^{\theta_j}}.
\end{equation}
This optimization problem is solved using the standard Minorization-Maximization (MM) algorithm \cite{hunter2004mm}. Reparameterizing the quality score as $\gamma_i = e^{\theta_i}$, the update rule at iteration $t$ is:
\begin{equation}
    \gamma_i^{(t+1)} = \frac{\sum_{j \neq i} W_{i,j}}{\sum_{j \neq i} \frac{N_{i,j}}{\gamma_i^{(t)} + \gamma_j^{(t)}}}.
\end{equation}
After convergence, we select the prompt with the highest estimated quality: $p^\star = \arg\max_{p_i \in \mathcal{C}} \hat{\theta}_i$.

\begin{table}[t]	
	\centering\renewcommand\arraystretch{1.03}\setlength{\tabcolsep}{3.3pt}
	\belowrulesep=0pt\aboverulesep=0pt
        \caption{Performance comparison on closed-ended benchmarks. The method marked with $\ast$ is reproduced by us, while other baseline results are sourced from \cite{xiang2025self}. The best and second-best scores are highlighted in \textbf{bold} and \underline{underlined}. All methods use GPT-4o-mini \cite{hurst2024gpt} as the execution LLM and are evaluated on the same test sets, with results averaged over three independent runs.\label{closed-ended}}
	\begin{tabular}{c|c|ccccc|c}
		\toprule
		\multirow{2}{*}{Method} & 
		\multirow{2}{*}{Venue} & 
		\multicolumn{5}{c|}{Accuracy (\%)}&
        \multirow{2}{*}{Avg.}\\
		\cmidrule(lr){3-7}
		&\multicolumn{1}{c|}{} 
		&\multicolumn{1}{c}{GPQA} 
		&\multicolumn{1}{c}{AGIEval-MATH} 
		&\multicolumn{1}{c}{LIAR}
		&\multicolumn{1}{c}{WSC} 
        &\multicolumn{1}{c|}{BBH-Navigate}\\
		\midrule
		  IO 
          & - & 38.9 & 42.1 & 63.5 & 72.4 & 91.3 & 61.6 \\
		  CoT \cite{NEURIPS2022_9d560961}
          & NeurIPS 22 & 41.6 & 44.5 & 65.4 & 77.8 & 89.7 & 63.8 \\
		  RaR \cite{deng2023rephrase}
          & arXiv 23 & 40.2 & 42.1 & 50.5 & 79.1 & 93.5 & 61.1 \\
	\midrule
        APE \cite{zhou2022large} 
          & ICLR 23 & 41.1 & 44.4 & 65.9 & 80.2 & 92.5 & 64.8 \\
        OPRO \cite{yang2023large}
          & ICLR 24 & 43.3 & 46.1 & 67.6 & 80.2 & 95.8 & 66.6 \\
        PromptAgent \cite{wangpromptagent}
          & ICLR 24 & 41.3 & 41.4 & 64.1 & \textbf{82.7} & 95.7 & 65.0 \\
        PromptBreeder \cite{fernando2024promptbreeder} 
          & ICML 24 & 40.9 & 45.9 & 63.2 & 76.7 & \underline{96.3} & 64.6 \\
        TextGrad \cite{yuksekgonul2025optimizing}
          & Nature 25 & 40.2 & 44.4 & 65.7 & 78.0 & 91.3 & 63.9 \\
        \midrule
		SPO \cite{xiang2025self}
          & EMNLP 25 & 41.8 & 45.3 & 66.9 & 81.1 & \underline{96.3} & 66.3\\
        PDO\textsuperscript{$\ast$} \cite{wu2025llm}
          & ACL 26 & \underline{43.6} & \underline{47.7} & \underline{67.9} & \underline{82.0} & 96.2 & \underline{67.5} \\
         \rowcolor{gray!20}
		UPA (Ours) 
          & - & \textbf{45.5} & \textbf{52.1} & \textbf{68.2} & \textbf{82.7} & \textbf{98.0} & \textbf{69.3} \\      
		\bottomrule
	\end{tabular}
\end{table}

\section{Experiments}
\label{experiment}
\subsection{Setup}
\label{setup}
\textbf{Datasets.}
We evaluate UPA on both closed-ended and open-ended tasks. For closed-ended assessment, we use five widely adopted benchmarks: GPQA \cite{rein2024gpqa}, AGIEval-MATH \cite{zhong2024agieval}, LIAR \cite{wang2021new}, WSC \cite{levesque2012winograd}, and BBH-Navigate \cite{suzgun2023challenging}. These datasets provide concise and verifiable reference answers, enabling accuracy-based evaluation. To ensure fair and controlled comparisons with prior works, we follow the same data partitioning and subset selection protocols as \cite{xiang2025self}. For open-ended assessment, we select the Writing, Roleplay, and Humanities sub-tasks from MT-Bench \cite{NEURIPS2023_91f18a12}, which emphasize instruction adherence, coherence, and generative quality. For MT-Bench, we retain the original benchmark questions exclusively for evaluation and construct 10 auxiliary questions per sub-task, disjoint from the test set and used solely for optimization. Additional dataset details are provided in Appendix~\ref{dataset_details}.

\textbf{Baselines.}
For closed-ended tasks, we compare UPA with three categories of baselines. The first includes conventional prompting strategies: Input-Output (IO), Chain-of-Thought (CoT) \cite{NEURIPS2022_9d560961}, and Rephrase-and-Respond (RaR) \cite{deng2023rephrase}. The second covers GT-dependent prompt optimization methods: APE \cite{zhou2022large}, OPRO \cite{yang2023large}, PromptAgent \cite{wangpromptagent}, PromptBreeder \cite{fernando2024promptbreeder}, and TextGrad \cite{yuksekgonul2025optimizing}. The third includes GT-free prompt optimizers: SPO \cite{xiang2025self} and PDO \cite{wu2025llm}. For open-ended tasks, we compare UPA with conventional prompting strategies and GT-free optimizers, including IO, CoT, RaR, SPO, and PDO.

\textbf{Evaluation Metrics.}
We employ standard accuracy as the evaluation metric for all closed-ended benchmarks. For the open-ended MT-Bench, we adopt the LLM-as-a-judge framework to perform pairwise comparisons following \cite{xiang2025self}. Results are reported as UPA's pairwise win rate against each baseline, where a win rate exceeding 50\% signifies an overall preference for our approach.

\begin{table}[t]
	\centering\renewcommand\arraystretch{1.05}\setlength{\tabcolsep}{3pt}
	\belowrulesep=0pt\aboverulesep=0pt
	\caption{Cross-model generalization results for GT-free prompt optimizers on frontier execution LLMs. Full average-accuracy comparisons with all baselines are provided in Appendix~\ref{extend_generalization}. \label{tab:cross_model}}
	\begin{tabular}{c|c|ccccc|c}
		\toprule
		\multirow{2}{*}{Execution LLM} & \multirow{2}{*}{Method} & \multicolumn{5}{c|}{Accuracy (\%)} & \multirow{2}{*}{Avg.} \\
		\cmidrule(lr){3-7}
		& & GPQA & AGIEval-MATH & LIAR & WSC & BBH-Navigate & \\
		\midrule
		\multirow{3}{*}{GPT-5} 
		& SPO \cite{xiang2025self} & 82.5 & 94.9 & 78.0 & 98.2 & 98.5 & 90.4 \\
        & PDO \cite{wu2025llm} & 83.3 & 95.2 & 78.3 & 97.6 & 99.0 & 90.7 \\
		& \cellcolor{gray!20} UPA (Ours) & \cellcolor{gray!20}\textbf{84.2} & \cellcolor{gray!20}\textbf{95.7} & \cellcolor{gray!20}\textbf{78.8} & \cellcolor{gray!20}\textbf{98.5} & \cellcolor{gray!20}\textbf{100.0} & \cellcolor{gray!20}\textbf{91.4} \\
		\midrule
		\multirow{3}{*}{Claude-4.5-Sonnet} 
		& SPO \cite{xiang2025self} & 73.7 & 84.7 & 74.1 & 97.3 & 99.8 & 85.9 \\
        & PDO \cite{wu2025llm} & 74.7 & 85.8 & 74.2 & 97.1 & 99.5 & 86.3 \\
		& \cellcolor{gray!20} UPA (Ours) & \cellcolor{gray!20}\textbf{75.8} & \cellcolor{gray!20}\textbf{86.6} & \cellcolor{gray!20}\textbf{74.7} & \cellcolor{gray!20}\textbf{98.0} & \cellcolor{gray!20}\textbf{100.0} & \cellcolor{gray!20}\textbf{87.0} \\
		\midrule
		\multirow{3}{*}{DeepSeek-V3.2} 
		& SPO \cite{xiang2025self} & 73.6 & 86.3 & 68.2 & 93.1 & 99.3 & 84.1 \\
        & PDO \cite{wu2025llm} & 75.1 & 87.8 & 68.3 & 93.3 & 99.3 & 84.8 \\
		& \cellcolor{gray!20} UPA (Ours) & \cellcolor{gray!20}\textbf{78.3} & \cellcolor{gray!20}\textbf{93.1} & \cellcolor{gray!20}\textbf{68.6} & \cellcolor{gray!20}\textbf{93.8} & \cellcolor{gray!20}\textbf{99.7} & \cellcolor{gray!20}\textbf{86.7} \\
		\bottomrule
	\end{tabular}
\end{table}

\textbf{Implementation Details.}
Consistent with \cite{xiang2025self}, UPA uses GPT-4o-mini \cite{hurst2024gpt} as both the execution LLM ($f_\text{exec}$, temperature $0.0$) and judge LLM ($f_\text{judge}$, temperature $0.3$), and GPT-4o \cite{hurst2024gpt} as the optimization LLM ($f_\text{opt}$, temperature $0.7$). All methods use the same execution template, \texttt{\{p\}\{q\}\{req\}}: the query \texttt{\{q\}} and formatting requirement \texttt{\{req\}} are fixed across methods, while only the prompt \texttt{\{p\}} varies. Starting from a CoT prompt $p_o$, UPA runs for $T=100$ search iterations with $c_\text{puct}=0.1$ to balance exploration and exploitation. To reduce semantic redundancy, we use a diversity penalty $\lambda_\text{div}=0.2$ with text-embedding-3-small \cite{openai2024embeddings} as the embedding function $\phi(\cdot)$. We set the branching factor to $M=3$, expansion batch size to $|\mathcal{Q}_{\text{exp}}|=5$, and simulation batch size to $|\mathcal{Q}_{\text{sim}}|=5$. In the selection phase, we use an uncertainty coefficient $\lambda_\text{unc}=0.5$ and retain the top $K=5$ candidates for a global round-robin tournament on a selection set of size $|\mathcal{Q}_{\text{sel}}|=10$. Finally, the MM algorithm follows standard convergence criteria, with a maximum of $100$ iterations and a threshold of $10^{-4}$.

\begin{table}[t]
\centering
\small 
\renewcommand\arraystretch{1.1}\setlength{\tabcolsep}{1.7pt}
\belowrulesep=0pt\aboverulesep=0pt
\caption{Pairwise win rates (\%) of UPA against conventional prompting strategies and GT-free optimizers on open-ended MT-Bench sub-tasks. GT-dependent optimizers are excluded because MT-Bench lacks GT labels. A win rate $>50\%$ indicates that UPA is preferred by the judge LLM.\label{tab:open_ended}}
\begin{tabular}{l|ccc|ccc|ccc}
\toprule
\multirow{2}{*}{Comparison} & \multicolumn{3}{c|}{GPT-5} & \multicolumn{3}{c|}{Claude-4.5-Sonnet} & \multicolumn{3}{c}{DeepSeek-V3.2} \\
\cmidrule(lr){2-4} \cmidrule(lr){5-7} \cmidrule(lr){8-10}
 & Writing & Roleplay & Humanities & Writing & Roleplay & Humanities & Writing & Roleplay & Humanities \\
\midrule
UPA vs. IO  & 55.0 & 62.5 & 68.3 & 60.0 & 62.5 & 85.0 & 71.7 & 70.0 & 75.0 \\
UPA vs. CoT & 52.5 & 64.2 & 61.7 & 56.7 & 55.8 & 64.2 & 55.8 & 54.2 & 63.3 \\
UPA vs. RaR & 53.3 & 68.3 & 65.0 & 57.5 & 70.8 & 88.3 & 66.7 & 73.3 & 70.0 \\
\midrule
UPA vs. SPO & 50.8 & 66.7 & 57.5 & 53.3 & 50.8 & 58.3 & 53.3 & 55.8 & 70.0 \\
UPA vs. PDO & 51.7 & 56.7 & 58.3 & 51.7 & 61.7 & 55.0 & 52.5 & 53.3 & 66.7 \\
\bottomrule
\end{tabular}
\end{table}

\begin{figure}[t]
	\begin{center}
		\begin{minipage}{1\linewidth}
			{\includegraphics[width=0.93\linewidth]{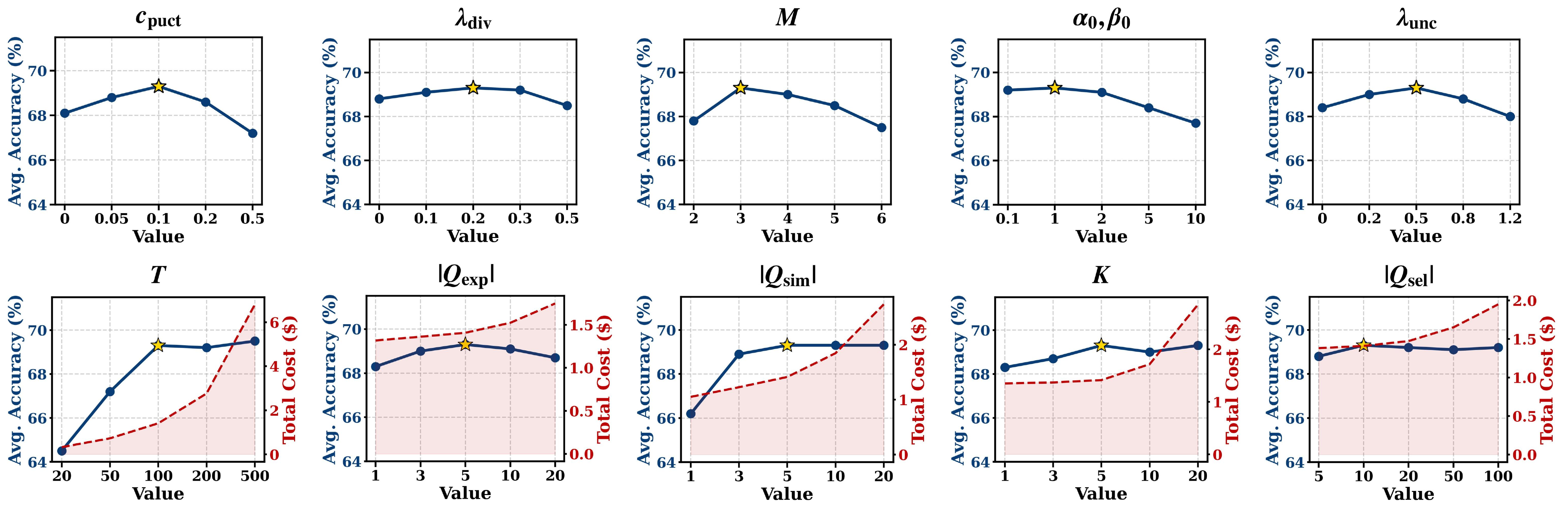}}
			\centering
		\end{minipage}
	\end{center}
	\caption{Sensitivity analysis of hyperparameters. The top and bottom rows show strategy-oriented and scale-oriented parameters, respectively. Our default settings balance performance and efficiency.\label{hyper}}
\end{figure}

\subsection{Performance}
\textbf{Closed-ended Tasks.}
Results in Table~\ref{closed-ended} show that UPA consistently outperforms recent SOTA prompt optimization baselines. To evaluate the robustness of our method, we further conduct experiments across three frontier execution LLMs: GPT-5 \cite{openai_gpt5_2025}, Claude-4.5-Sonnet \cite{anthropic_claude45_2025}, and DeepSeek-V3.2 \cite{liu2025deepseek}. Notably, the prompts are optimized under the GPT-4o-mini/GPT-4o configuration and directly applied to these target models without re-optimization. As shown in Table~\ref{tab:cross_model}, UPA maintains the best performance across executors, demonstrating strong cross-model generalizability. \textbf{Cost analysis}, \textbf{optimized prompts}, and \textbf{search tree visualizations} are provided in Appendices~\ref{cost_analysis},~\ref{optimized_prompts}, and~\ref{search_tree}.

\textbf{Open-ended Tasks.}
We evaluate UPA on open-ended tasks using the same frontier executors: GPT-5, Claude-4.5-Sonnet, and DeepSeek-V3.2. As shown in Table~\ref{tab:open_ended}, UPA outperforms all conventional prompting strategies and GT-free optimizers. These results demonstrate the practical effectiveness of our method. The \textbf{optimized prompts} for MT-Bench sub-tasks are provided in Appendix~\ref{optimized_prompts}.

\subsection{Ablation Studies}
\textbf{Comparison Protocols \& Search Strategies.}
As shown in Table~\ref{tab:search}, the ablation study results validate the effectiveness of our comparison protocols and search strategies. For comparison protocols, replacing the Five-Point Likert Scale with binary scores or removing positional debiasing consistently reduces accuracy, highlighting the need for fine-grained and unbiased feedback. For search strategies, removing the diversity penalty or exploration term in the modified UCB score also degrades performance. Notably, replacing parent-child relative comparisons with static comparisons against the initial root prompt causes a substantial performance drop ($-3.4\%$). Without local relative feedback, the search tree tends to degenerate into a breadth-first structure: since most refined candidates easily outperform the naive root prompt, the search becomes dominated by the exploration term.

\textbf{Selection Strategies.}
To validate our selection framework, we systematically ablate the LCB ranking (using posterior means only), Bayesian modeling (replacing posterior inference with empirical averages), and Stage II selection. As summarized in Table~\ref{tab:selection}, removing any component leads to a notable performance decline. Furthermore, we compare UPA against two standard baselines: choosing the candidate with the maximum $Q(v)$ or the highest local win rate. These alternatives cause significant degradation ($-4.2\%$ and $-4.6\%$), demonstrating that raw statistics from the search phase are insufficient for robust identification. This underscores the necessity of our two-stage selection in mitigating sampling noise and path-dependent estimates through mathematical design.

\begin{table}[t]
    \centering
    \begin{minipage}[t]{0.49\textwidth}
        \centering
        \caption{Ablation study on comparison protocols and search strategies. Our method performs best.}
        \label{tab:search}
        \scriptsize
        \setlength{\tabcolsep}{2pt}        
        \renewcommand\arraystretch{1.2}
        \belowrulesep=0pt\aboverulesep=0pt
        \begin{tabular}{c|ccccc|c}
        \toprule
        \multirow{2}{*}{Method} & \multicolumn{5}{c|}{Accuracy (\%)} & \multirow{2}{*}{Avg.} \\
        \cmidrule(lr){2-6}
        & GPQA & MATH & LIAR & WSC & BBH & \\
        \midrule
        w/o Likert Scale (Binary) & 43.7 & 51.8 & 67.0 & 82.4 & 97.3 & 68.4 \\
        w/o Positional Debiasing  & 44.6 & 52.0 & 67.9 & 81.2 & 96.8 & 68.5 \\
        w/o Diversity Penalty  & 44.8 & \textbf{52.8} & 67.5 & 81.5 & 97.5 & 68.8 \\
        w/o Exploration in UCB & 44.9 & 49.9 & \textbf{68.4} & 80.2 & 97.0 & 68.1 \\
        w/o Local Comparison  & 42.6 & 49.3 & 65.9 & 77.3 & 94.3 & 65.9 \\
        \rowcolor{gray!20}
        The Proposed & \textbf{45.5} & 52.1 & 68.2 & \textbf{82.7} & \textbf{98.0} & \textbf{69.3} \\
        \bottomrule
        \end{tabular}
    \end{minipage}
    \hfill
    \begin{minipage}[t]{0.49\textwidth}
        \centering
        \caption{Ablation study on selection strategies. Our two-stage selection outperforms all others.}
        \label{tab:selection}
        \scriptsize
        \setlength{\tabcolsep}{2pt}
        \renewcommand\arraystretch{1.2}
        \belowrulesep=0pt\aboverulesep=0pt
        \begin{tabular}{c|ccccc|c}
        \toprule
        \multirow{2}{*}{Method} & \multicolumn{5}{c|}{Accuracy (\%)} & \multirow{2}{*}{Avg.} \\
        \cmidrule(lr){2-6}
        & GPQA & MATH & LIAR & WSC & BBH & \\
        \midrule
        w/o LCB (Mean-only) & 44.1 & 51.2 & 68.1 & 82.5 & 96.0 & 68.4 \\
        w/o Bayesian Modeling & 45.3 & 50.8 & 67.0 & 79.8 & 97.7 & 68.1 \\
        w/o Stage II Selection  & \textbf{45.8} & 51.4 & 67.2 & 82.2 & 95.0 & 68.3 \\
        Max $Q(v)$ & 42.4 & 45.7 & 66.4 & 78.2 & 93.0 & 65.1 \\
        Max Local Win Rate  & 40.2 & 48.0 & 65.9 & 77.1 & 92.3 & 64.7 \\
        \rowcolor{gray!20}
        The Proposed & 45.5 & \textbf{52.1} & \textbf{68.2} & \textbf{82.7} & \textbf{98.0} & \textbf{69.3} \\
        \bottomrule
        \end{tabular}
    \end{minipage}
\end{table}

\textbf{Hyperparameters.}
We categorize the hyperparameters into strategy-oriented (cost-independent) and scale-oriented (cost-dependent) groups. The former, including $c_{\text{puct}}$, $\lambda_{\text{div}}$, $M$, $(\alpha_0, \beta_0)$, and $\lambda_{\text{unc}}$, governs the search and selection logic without directly changing the number of LLM calls. The latter, including $T$, $|\mathcal{Q}_{\text{exp}}|$, $|\mathcal{Q}_{\text{sim}}|$, $K$, and $|\mathcal{Q}_{\text{sel}}|$, directly scales the computational budget. As depicted in Figure~\ref{hyper}, our default configuration provides a favorable balance between performance and efficiency. We further provide an ablation study on the \textbf{judge and optimization LLMs} in Appendix~\ref{judge_and_opt}. 

\section{Conclusion}
In this paper, we propose UPA, an unsupervised prompt agent that enables structured search and selection in the absence of GT rewards. During search, UPA iteratively constructs an evolving tree structure to explore the prompt space, guided by fine-grained and position-debiased pairwise comparisons. In the selection phase, UPA employs a two-stage framework grounded in the BTL model to identify the optimal prompt from noisy and sparse comparison outcomes. Experiments across multiple tasks show that UPA consistently outperforms SOTA prompt optimization methods, proving that agent-style prompt optimization remains highly effective even in fully GT-free settings.

\bibliographystyle{unsrt}  
\bibliography{reference}


\appendix

\section{Related Works}
\subsection{Prompt Engineering}
Prompt engineering studies how human-designed instructions and inference procedures improve LLM performance. Existing works can be broadly categorized into task-agnostic reasoning frameworks and domain-specific applications. The first category focuses on task-agnostic techniques that enhance general reasoning capabilities. Beyond foundational Chain-of-Thought (CoT) prompting, which elicits stepwise intermediate outputs \cite{NEURIPS2022_9d560961,NEURIPS2022_8bb0d291,NEURIPS2023_dfc310e8,xia2025beyond}, subsequent work organizes model inference into richer computational structures. Techniques such as Tree of Thoughts (ToT) \cite{NEURIPS2023_271db992,ranaldi2024tree} and Graph of Thoughts (GoT) \cite{besta2024graph} allocate additional test-time compute to branching, comparison, and aggregation of intermediate states, supporting more systematic problem solving on complex tasks. The second category addresses domain-specific prompting tailored to distinct task types. In software engineering, prompt and agent workflows support planning, self-debugging, and modular program synthesis \cite{hong2023metagpt,ridnik2024code,adnan2025large,pantraining}. For data analysis and question answering, prompt designs emphasize structured analysis, tool use, and contextual reasoning \cite{wu2024chartinsights,hong2025data,zhang2025data,tang2025llm}. In decision-making and interactive settings, prompts specify roles, objectives, and action constraints to facilitate agentic behaviors and environment interaction \cite{wangvoyager,schmidgall2025agent,xu2025nuclear}. While these manually designed prompting frameworks, ranging from general reasoning patterns to domain-specific workflows, are effective, their design relies heavily on expert intuition and iterative trial-and-error. Moreover, these approaches primarily structure reasoning or task execution under fixed prompts, rather than systematically searching the prompt space itself. This scalability limitation motivates research in automated prompt optimization.

\subsection{Prompt Optimization}
Automated prompt optimization aims to algorithmically search for prompts that improve model performance over a task distribution. Existing methods differ along two key dimensions: the feedback signal used for selection and the topology used to organize search. GT-dependent methods typically rely on labeled validation data or task-specific metrics to guide optimization. Some methods perform sequential refinement, where an optimization LLM proposes new prompts based on historical scores or textual feedback, as in OPRO \cite{yang2023large} and TextGrad \cite{yuksekgonul2025optimizing}. Others maintain flat candidate pools or populations, generating and selecting prompts through scoring, mutation, or evolutionary operations, as in APE \cite{zhou2022large} and PromptBreeder \cite{fernando2024promptbreeder}. Beyond these unstructured searches, prompt agents formulate prompt optimization as a decision-making problem over a structured prompt space. PromptAgent \cite{wangpromptagent} and MCTS-OPS \cite{yu2025optimizing} instantiate this paradigm with MCTS-based, reward-guided planning over explicit search trees. However, these structured prompt agents still rely on GT-derived rewards, limiting their applicability in fully GT-free settings. Recent GT-free prompt optimizers replace absolute rewards with LLM-based pairwise preferences. SPO \cite{xiang2025self} performs reference-free single-path refinement, while DEEVO \cite{nair2025tournament} and PDO \cite{wu2025llm} extend GT-free optimization to flat pool-based or population-based search. These methods show that relative preferences can guide GT-free prompt search, but they either follow a single refinement trajectory or operate over flat candidate sets without an explicit structured prompt topology. In contrast, UPA studies whether GT-free feedback can support agent-style structured exploration, path-wise aggregation, and cross-branch selection.

Importantly, prompt optimization is methodologically distinct from test-time compute methods in prompt engineering, such as ToT and GoT \cite{NEURIPS2023_271db992,besta2024graph}. Test-time compute approaches allocate additional inference budget to each input, for example by sampling, branching, comparing, or aggregating multiple reasoning paths during answer generation. In contrast, prompt optimization amortizes computation during an offline optimization phase: once a prompt is optimized, it is executed as a fixed prompt on future inputs. Therefore, prompt optimization algorithms must address different challenges, including prompt-space exploration, noisy prompt-level evaluation, candidate selection under limited optimization budgets, and generalization of the optimized prompt across inputs.

\section{Additional Information on Methodology}
\label{appendix_method}
\subsection{Discussion on Assumption 2.1}
\label{app:assumption_discussion}
In this section, we discuss the rationale, potential violations, and supporting empirical diagnostics of Assumption~\ref{ass:ind}, which assumes that, conditioned on edge-level soft-win probabilities, the estimation errors of the local logit increments along a refinement path are statistically independent.

\textbf{Rationale.} 
This assumption is introduced to make path-wise uncertainty estimation feasible in a fully GT-free setting. Without independence, the variance of a path-wise sum would require covariance terms between edge-level estimation errors. However, without GT-derived quality measurements, these covariance terms are not identifiable from pairwise preference observations alone, creating a structural limitation for covariance estimation. Assumption~\ref{ass:ind} therefore provides a tractable additive approximation for path-wise variance. Methodologically, this approximation is supported by UPA's simulation design: each refinement edge is evaluated on a separately sampled query batch $\mathcal{Q}_{\text{sim}}$ from the optimization pool $\mathcal{Q}$, which reduces direct observation reuse across successive edges.

\textbf{Potential Violations.} 
We acknowledge that strict independence may be violated due to semantic dependencies between prompts. Since a child prompt is derived by refining its parent, the two prompts may share similar strengths or failure modes on certain query types. Consequently, estimation errors along a refinement path may exhibit positive correlations; for example, if a parent prompt is overestimated on a particular subset of queries, its child may inherit a similar overestimation pattern.

\textbf{Empirical Diagnostic.} 
To examine this concern, we conduct a post-hoc diagnostic over all generated prompts from the 15 search trees in our closed-ended experiments, corresponding to five closed-ended tasks across three independent runs. Using test-set accuracy increments as an external proxy for true edge-level improvement, we standardize local logit estimates and accuracy increments across edges and measure the Pearson correlation between successive estimation errors. The resulting correlation is low ($0.06$), suggesting that error dependence along refinement paths is weak in our experiments. This supports the use of the additive variance approximation for Stage~I filtering in our setting.

\subsection{Proof of Proposition 2.2}
\label{proof2.2}
\textbf{Restatement of Proposition 2.2.}
\textit{Under the fractional-response BTL model, the path-wise posterior mean $\mu_v$ provides a consistent estimate of the latent quality difference $\theta_v - \theta_o$. Moreover, under Assumption~\ref{ass:ind}, for any finite path, as the sampling budget on each edge increases, the path-wise variance $\sigma_v^2$ vanishes. Consequently, the uncertainty penalty in $\mathrm{LCB}(v)$ vanishes asymptotically, and the induced ranking concentrates around the BTL latent quality ordering when the gaps are nonzero.}

\begin{proof}
The proof has two parts: consistency of the posterior mean and vanishing variance.

\textbf{Consistency of the Path-Wise Posterior Mean.}
Recall from Eq.~(\ref{eq:telescope}) that the root-relative latent quality difference decomposes into a sum of local log-odds: $\theta_v - \theta_o = \sum_{(u,k) \in \mathcal{P}_{o \to v}} \mathrm{logit}(\pi_{k,u})$. Under the fractional-response BTL model, each $\pi_{k,u}$ denotes the soft-win probability associated with the directed refinement edge $(u,k)$. In the Bayesian filtering procedure, the local posterior mean $\mu_{k,u}^{\Delta}$ derived in Eq.~(\ref{eq13}) is the posterior expectation of the local logit increment $\Delta\theta_{k,u}=\mathrm{logit}(\pi_{k,u})$. By linearity of expectation, the path-wise posterior mean defined in Eq.~(\ref{eq14}) aggregates these local posterior means: $\mu_v = \sum_{(u,k) \in \mathcal{P}_{o \to v}} \mu_{k,u}^{\Delta}$. Although the posterior mean of the logit can exhibit finite-sample bias due to the nonlinearity of the logit transformation, this bias vanishes asymptotically as the posterior over each $\pi_{k,u}$ concentrates around its true soft-win probability. Therefore, for any finite path, $\mu_v$ converges asymptotically to $\theta_v-\theta_o$ as the edge-level sampling budgets increase.

\textbf{Vanishing Variance and Convergence.}
We next examine the path-wise variance $\sigma_v^2$. Under Assumption~\ref{ass:ind}, edge-level estimation errors along a refinement path are independent conditioned on their soft-win probabilities, so for a finite path, the total variance is additive as in Eq.~(\ref{eq14}): $\sigma_v^2 = \sum_{(u,k) \in \mathcal{P}_{o \to v}} (\sigma_{k,u}^{\Delta})^2$. The local edge-wise variance term in Eq.~(\ref{eq13}) is $(\sigma_{k,u}^{\Delta})^2 = \psi_1(\alpha_{k,u})+\psi_1(\beta_{k,u})$, where $\psi_1(\cdot)$ is the Trigamma function. Since $\psi_1(z)\sim 1/z$ for large $z$, and since $\alpha_{k,u}$ and $\beta_{k,u}$ grow linearly with the number of comparison trials $n_{k,u}$, we obtain the asymptotic limit:
\begin{equation}
\lim_{n_{k,u} \to \infty} (\sigma_{k,u}^{\Delta})^2 = \lim_{n_{k,u} \to \infty} \left( \psi_1(\alpha_{k,u}) + \psi_1(\beta_{k,u}) \right) = 0.
\end{equation}
Because the path length is finite, summing over finitely many local variances that vanish for every edge gives $\sigma_v^2\to 0$. Finally, Eq.~(\ref{eq15}) defines $\mathrm{LCB}(v)=\mu_v-\lambda_{\text{unc}}\sqrt{\sigma_v^2}$. As $\mu_v\to\theta_v-\theta_o$ and $\sigma_v^2\to 0$, the uncertainty penalty vanishes and $\mathrm{LCB}(v)$ therefore converges in probability to the latent quality difference $\theta_v-\theta_o$. Consequently, when latent quality gaps among candidate nodes are nonzero, the ranking induced by $\mathrm{LCB}$ concentrates around the BTL latent quality ordering.
\end{proof}

\subsection{Prompts for the Judge and Optimization LLMs}
We present the prompts for the judge and optimization LLMs below, using only simple instructions.

\begin{promptbox}[title=Prompt for the Judge LLM]
"""
Evaluate the two responses (A and B) based on the question and requirements.

<instruction>
Compare the two responses and determine which one better meets the requirements and accurately answers the question. 
Do NOT blindly favor the first response. Compare strictly based on content.
You must choose one of the following decisions:
- "A_MUCH_BETTER":     Response A is SIGNIFICANTLY better than B.
- "A_BETTER":          Response A is SLIGHTLY better than B.
- "TIE":               Both are equally good or equally bad.
- "B_BETTER":          Response B is SLIGHTLY better than A.
- "B_MUCH_BETTER":     Response B is SIGNIFICANTLY better than A.
</instruction>

<requirement>
{req}
</requirement>

<question>
{q}
</question>

<response_a>
{a}
</response_a>

<response_b>
{b}
</response_b>

Please strictly follow this XML format for your output:

<analyse>
Brief comparison analysis.
</analyse>

<decision>
One of [A_MUCH_BETTER, A_BETTER, TIE, B_BETTER, B_MUCH_BETTER]
</decision>
"""
\end{promptbox}

\begin{promptbox}[title=Prompt for the Optimization LLM]
"""
You are optimizing a prompt to better satisfy the given task requirement.
The goal is to improve the prompt while preserving its ability to generalize across diverse inputs.

<context>
The reference prompt has performed well in a previous iteration.
You must further refine it, making meaningful improvements.
The optimized prompt must differ from the reference prompt.
</context>

<requirement>
{req}
</requirement>

<reference_prompt>
{p}
</reference_prompt>

<execution_results>
The following are example outputs produced by the reference prompt on sampled inputs:
{ex}
</execution_results>

Output Format:
You MUST follow the XML format below.

<analyse>
Briefly analyze the limitations of the reference prompt and identify concrete ways to address them.
</analyse>

<modification>
Summarize the key improvement in one concise sentence.
</modification>

<prompt>
Write the full text of the optimized prompt here.
</prompt>
"""
\end{promptbox}

\section{Additional Information on Experiments}
\label{appendix_expeirment}
\subsection{Dataset Details}
\label{dataset_details}
\begin{table}[h]
\centering
\renewcommand\arraystretch{1.1}\setlength{\tabcolsep}{6pt}
\belowrulesep=0pt\aboverulesep=0pt
\caption{Statistics of the evaluation and optimization subsets used in our experiments. The $\ast$ indicates that MT-Bench optimization samples are auxiliary questions constructed outside the standard benchmark test set (10 per selected sub-task) and used solely for optimization purposes.}\label{tab:datasets}
\begin{tabular}{c|ccccc|c}
\toprule
\multirow{2}{*}{Partition} & \multicolumn{6}{c}{Sample Count} \\
\cmidrule(lr){2-7}
& GPQA & AGIEval-MATH & LIAR & WSC & BBH-Navigate & MT-Bench \\
\midrule
Evaluation Set & 198 & 256 & 461 & 150 & 200 & 30 \\
Optimization Set & 250 & 232 & 3681 & 50 & 50 & 30\textsuperscript{$\ast$} \\
\bottomrule
\end{tabular}
\end{table}

We evaluate UPA on a diverse set of benchmarks covering closed-ended reasoning and decision-making tasks as well as open-ended generation scenarios. Following \cite{xiang2025self}, we adopt the same data partitioning and subset selection protocols to ensure a fair and consistent comparison, with sample counts summarized in Table~\ref{tab:datasets}. Specifically, \textbf{GPQA}~\cite{rein2024gpqa} contains graduate-level multiple-choice questions across biology, physics, and chemistry, and is used to assess expert-level reasoning. \textbf{AGIEval-MATH}~\cite{zhong2024agieval} evaluates mathematical problem solving with challenging questions from standardized exams. \textbf{LIAR}~\cite{wang2021new} targets truthfulness classification, requiring models to determine the veracity of statements with contextual metadata. \textbf{WSC}~\cite{levesque2012winograd} evaluates commonsense reasoning through Winograd Schema pronoun resolution. \textbf{BBH-Navigate}~\cite{suzgun2023challenging}, from BIG-Bench Hard, assesses navigational reasoning by asking whether an agent returns to its starting point after a sequence of steps. Finally, for open-ended assessment, we use the Writing, Roleplay, and Humanities sub-tasks from \textbf{MT-Bench}~\cite{NEURIPS2023_91f18a12} to evaluate instruction following, coherence, and generative quality. For MT-Bench, we retain the original benchmark questions exclusively for evaluation and construct 10 auxiliary questions per selected sub-task, disjoint from the test set and used solely for optimization purposes.

\subsection{Conventional Prompting Baselines}
\label{conventional_prompting_baselines}
This section clarifies the conventional baselines used in our experiments. IO denotes direct input-output prompting, where the prompt \texttt{\{p\}} is empty. The CoT and RaR prompts are shown below.

\begin{promptboxx}[title=CoT Prompt]
Let's think step by step.
\end{promptboxx}

\begin{promptboxx}[title=RaR Prompt]
Please rephrase the question in a way that is easier to understand, minimizing ambiguity and considering edge cases. 
And then provide a solution step by step for the question.
\end{promptboxx}

\subsection{Optimization Cost Analysis}
\label{cost_analysis}
\begin{figure}[t]
    \begin{center}
        \begin{minipage}{1\linewidth}
            {\includegraphics[width=0.9\linewidth]{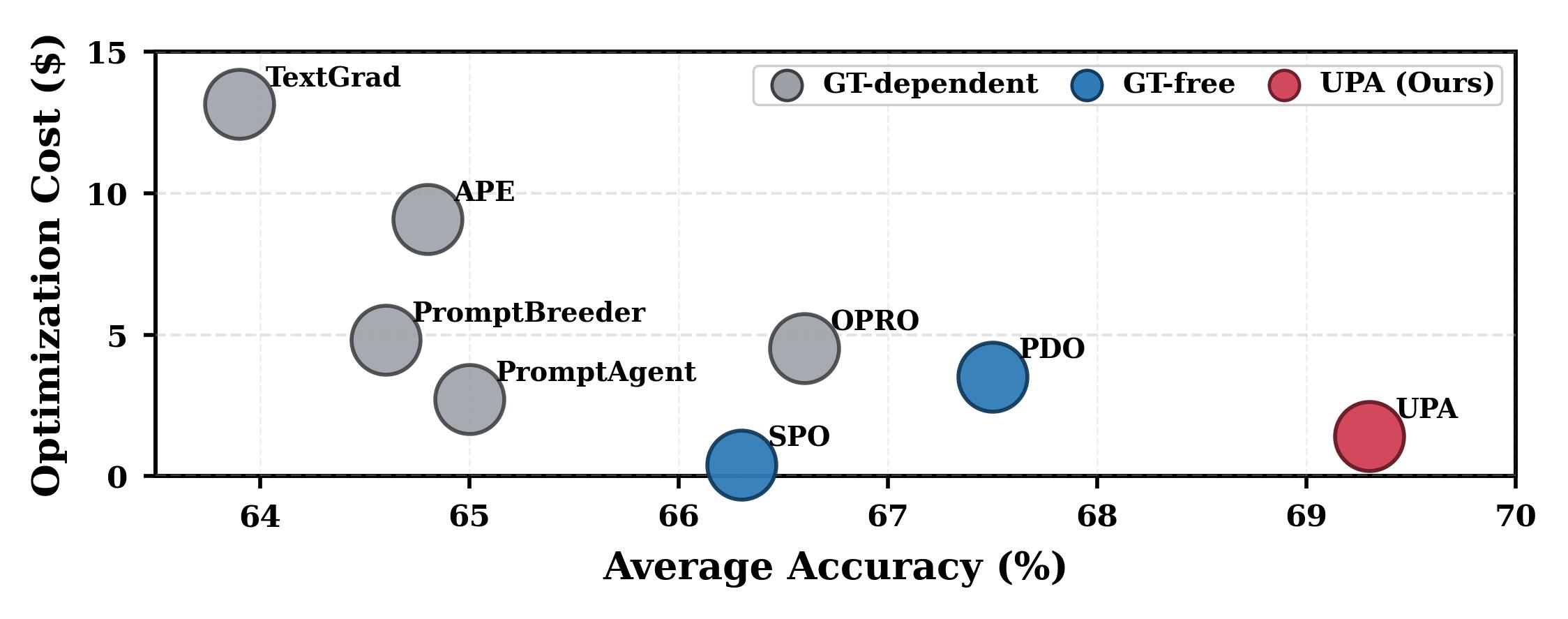}}
            \centering
        \end{minipage}
    \end{center}
    \caption{Cost-performance trade-off among prompt optimizers on closed-ended benchmarks.\label{fig:cost_performance}}
\end{figure}

\begin{table}[t]
\centering
\renewcommand\arraystretch{1.1}\setlength{\tabcolsep}{2.6pt}
\belowrulesep=0pt\aboverulesep=0pt
\caption{Detailed cost breakdown of UPA per dataset run. The majority of the budget is allocated to the search phase to ensure sufficient exploration of the prompt space. Abbreviations: Exp. (Expansion), Sim. (Simulation), Opt. (Optimization), Exec. (Execution), and Eval. (Evaluation).}
\label{tab:upa_breakdown_refined}
\begin{tabular}{c|ccccc|ccc|c}
\toprule
\multirow{2}{*}{Metric} & \multicolumn{5}{c|}{Search Phase} & \multicolumn{3}{c|}{Selection Phase} & \multirow{2}{*}{Total} \\
\cmidrule(lr){2-6} \cmidrule(lr){7-9}
 & Exp. Opt. & Exp. Exec. & Sim. Exec. & Sim. Eval. & Subtotal & Exec. & Eval. & Subtotal & \\
\midrule
Cost (\$) & 0.79 & 0.11 & 0.23 & 0.22 & 1.35 & 0.01 & 0.05 & 0.06 & 1.41 \\
Ratio (\%) & 56.0 & 7.8 & 16.3 & 15.6 & 95.7 & 0.7 & 3.6 & 4.3 & 100.0 \\
\bottomrule
\end{tabular}
\end{table}

To evaluate the economic efficiency of UPA, we conduct an optimization cost analysis using standard API pricing as of early 2026. Specifically, we use GPT-4o-mini (\$0.15/\$0.60 per 1M input/output tokens) for execution and evaluation, and GPT-4o (\$2.50/\$10.00 per 1M input/output tokens) for optimization. For baseline optimizers except PDO, we report the costs from \cite{xiang2025self}, since the unit prices of the involved models (GPT-4o-mini, GPT-4o, and Claude-3.5-Sonnet) have remained unchanged. The cost of PDO is computed from our reproduction. As shown in Figure~\ref{fig:cost_performance}, UPA achieves the best average accuracy with a relatively low optimization cost. Compared with other optimizers except SPO, UPA is substantially less expensive while delivering stronger performance. Although SPO is cheaper, it follows a single-path refinement strategy and shows substantially lower accuracy in our experiments. Moreover, when SPO is scaled to a comparable optimization budget, it reaches only 65.2\% average accuracy, well below UPA's 69.3\%, highlighting the limitation of single-path refinement. Notably, the cost differences arise only during prompt optimization, while all methods use fixed prompts at test time and therefore incur roughly the same inference cost. A detailed breakdown of UPA's cost is presented in Table~\ref{tab:upa_breakdown_refined}. The majority of the budget (95.7\%) is allocated to the search phase, with 56.0\% dedicated to expansion optimization. This design concentrates GPT-4o usage on prompt refinement, where stronger reasoning is most useful, while keeping the selection phase efficient (4.3\% of the total budget). Overall, this cost structure supports robust candidate discovery while avoiding expensive global comparisons over all generated prompts during final selection. The cost of embedding generation is omitted as it remains negligible at approximately \$0.0002 per dataset.

\subsection{Evaluation Stability and Calibration}
\label{evaluation_stability}

\begin{table}[h]
\centering
\renewcommand\arraystretch{1.08}\setlength{\tabcolsep}{7.5pt}
\belowrulesep=0pt\aboverulesep=0pt
\caption{Mean accuracy (\%) $\pm$ standard deviation over three runs under the GPT-5 executor.}
\label{tab:stability_gpt5}
\begin{tabular}{c|ccccc}
\toprule
\multirow{2}{*}{Method} & \multicolumn{5}{c}{Mean Accuracy (\%) $\pm$ Standard Deviation} \\
\cmidrule(lr){2-6}
& GPQA & AGIEval-MATH & LIAR & WSC & BBH-Navigate \\
\midrule
SPO \cite{xiang2025self} & $82.5 \pm 2.3$ & $94.9 \pm 1.4$ & $78.0 \pm 0.9$ & $98.2 \pm 0.8$ & $98.5 \pm 0.5$ \\
\rowcolor{gray!20} UPA (Ours) & $\textbf{84.2} \pm 2.1$ & $\textbf{95.7} \pm 1.4$ & $\textbf{78.8} \pm 0.8$ & $\textbf{98.5} \pm 0.4$ & $\textbf{100.0} \pm 0.0$ \\
\bottomrule
\end{tabular}
\end{table}

To further assess evaluation stability, we report the mean and standard deviation across three independent runs for SPO and UPA under the GPT-5 executor. As shown in Table~\ref{tab:stability_gpt5}, UPA outperforms SPO across all closed-ended benchmarks, while its standard deviation is no larger than SPO's on every dataset. This indicates that UPA effectively maintains stable performance across independent runs.

We also examine the calibration between the two selection stages. Across all 15 runs, corresponding to five closed-ended benchmarks and three runs, the average Spearman correlation between the top-5 candidates' Stage I LCB rankings and Stage II BTL rankings is $0.618$. This positive correlation suggests that Stage I provides an effective directional filter before the final global tournament.

\subsection{Full Cross-Model Generalization Results}
\label{extend_generalization}
\begin{table}[h]
      \centering\renewcommand\arraystretch{1.05}\setlength{\tabcolsep}{3.7pt}
      \belowrulesep=0pt\aboverulesep=0pt
      \caption{Full cross-model generalization results on frontier LLMs. We report average accuracy across all closed-ended benchmarks. PA and PB denote PromptAgent and PromptBreeder, respectively.
  \label{tab:cross_model_full}}
      \begin{tabular}{c|ccccccccccc}
          \toprule
          \multirow{2}{*}{Executor} & \multicolumn{11}{c}{Average Accuracy (\%)} \\
          \cmidrule(lr){2-12}
          & IO & CoT & RaR & APE & OPRO & PA & PB & TextGrad & SPO & PDO & \cellcolor{gray!20}UPA \\
          \midrule
          GPT-5
          & 90.0 & 90.7 & 89.6 & 90.2 & 90.8 & 91.1 & 90.5 & 90.5 & 90.4 & 90.7 & \cellcolor{gray!20}\textbf{91.4} \\
          Claude-4.5-Sonnet
          & 86.2 & 85.8 & 85.1 & 85.5 & 86.6 & 86.5 & 85.8 & 86.1 & 85.9 & 86.3 & \cellcolor{gray!20}\textbf{87.0} \\
          DeepSeek-V3.2
          & 82.9 & 85.3 & 83.4 & 84.8 & 86.3 & 85.8 & 86.1 & 85.5 & 84.1 & 84.8 & \cellcolor{gray!20}\textbf{86.7} \\
          \bottomrule
      \end{tabular}
\end{table}

Table~\ref{tab:cross_model_full} provides the full comparison with all baseline methods under frontier execution LLMs. UPA achieves the highest average accuracy on all executors, demonstrating strong cross-model robustness.

\subsection{Ablation Study on the Judge and Optimization LLMs}
\label{judge_and_opt}
\begin{table}[h]
\small
\centering
\renewcommand\arraystretch{1.1}\setlength{\tabcolsep}{1pt}
\belowrulesep=0pt\aboverulesep=0pt
\caption{Ablation study on the judge and optimization LLMs. We report the average accuracy and cost for $3 \times 3$ combinations. The results indicate that our default configuration, using GPT-4o-mini as the judge and GPT-4o for optimization, achieves a favorable performance-cost balance. Abbreviations: Opt (Optimization), Mini (GPT-4o-mini), 4o (GPT-4o), and Sonnet (Claude-4.5-Sonnet).}
\label{tab:ablation_matrix}
\begin{tabular}{c|ccc|ccc|ccc}
\toprule
\multirow{2}{*}{Metric} & \multicolumn{3}{c|}{Judge: Mini (\$0.15/\$0.6)} & \multicolumn{3}{c|}{Judge: 4o (\$2.5/\$10)} & \multicolumn{3}{c}{Judge: Sonnet (\$3/\$15)} \\
\cmidrule(lr){2-4} \cmidrule(lr){5-7} \cmidrule(lr){8-10}
 & Opt: Mini & \cellcolor{gray!20}Opt: 4o & Opt: Sonnet & Opt: Mini & Opt: 4o & Opt: Sonnet & Opt: Mini & Opt: 4o & Opt: Sonnet \\
\midrule
Avg. Acc (\%) & 66.5 & \cellcolor{gray!20}69.3 & 69.2 & 66.7 & 69.4 & 69.3 & 66.8 & 69.3 & 69.5 \\
Cost (\$) & 0.65 & \cellcolor{gray!20}1.41 & 1.71 & 4.76 & 5.51 & 5.82 & 6.02 & 6.77 & 7.08 \\
\bottomrule
\end{tabular}
\end{table}
To validate the rationale behind our default configuration (judge LLM: GPT-4o-mini, optimization LLM: GPT-4o), we conduct a comprehensive ablation study over $3 \times 3$ combinations of judge and optimization LLMs, including the frontier Claude-4.5-Sonnet. We adopt GPT-4o and GPT-4o-mini primarily to maintain consistency with prior works, ensuring a fair and controlled comparison. Beyond this protocol-level consideration, the results in Table~\ref{tab:ablation_matrix} further demonstrate that, under this standardized setup, our configuration lies on a favorable trade-off between performance and cost. For the judge LLM, upgrading from GPT-4o-mini to GPT-4o or Claude-4.5-Sonnet leads to a multi-fold increase in cost ($>3\times$) while yielding negligible performance improvements ($\leq 0.3\%$). This observation suggests that, under our fine-grained and position-debiased pairwise comparison design, even the relatively lightweight GPT-4o-mini is sufficient as a judge model. For the optimization LLM, GPT-4o emerges as the practical ``sweet spot'', substantially outperforming GPT-4o-mini (69.3\% vs.\ 66.5\%) while remaining more cost-effective than Claude-4.5-Sonnet. Notably, substituting the optimizer with the more expensive Claude-4.5-Sonnet does not improve performance (69.2\% vs.\ 69.3\%) and increases the overall cost by 21\%. We attribute this saturation effect to an execution bottleneck: since task execution is performed by GPT-4o-mini, the achievable performance is inherently constrained by the executor's capabilities, which GPT-4o is already able to fully exploit. Consequently, further increasing the optimizer's model capacity yields diminishing returns, validating our default configuration as an economically efficient and methodologically consistent design choice.

\subsection{Illustration of LLM-Based Pairwise Comparison}
\label{pairwise_comparison_example}
To illustrate LLM-based pairwise comparison, we provide an example from GPQA below. Given two candidate responses to the same question, the judge LLM compares their reasoning quality and final decisions, identifying Response A as better because it follows clearer problem-solving steps to match the closest option, whereas Response B exhibits logical inconsistency and guesses the answer.

\begin{promptbox}[title=Representative Judge Evaluation]
<question>
Arrange the following carbocations in decreasing order of stability:
1. CH3OCH2(+)
2. CH2(+)-NO2
3. CH2(+)-CHO
4. CH3COCH2(+)
5. CH2(+)-OH
6. CH3CH2(+)
7. CH2(+)CH2Cl
Options:
A. 5>6>7>1>2>3>4
B. 1>5>7>6>4>3>2
C. 5>1>6>7>4>3>2
D. 4>3>2>5>1>6>7
</question>

<response_a>
[Excerpt] ...The order of stability from most to least stable based on the analysis is: 1 > 5 > 4 > 6 > 7 > 3 > 2... Now, we will match this order with the provided options... The closest match to our derived order is option C.
Conclusion: <answer>C</answer>
</response_a>

<response_b>
[Excerpt] ...After careful analysis, the correct order of stability is not represented in the provided options. Since none of the options match the derived stability order, we conclude that the provided options may contain an error.
Final Answer: <answer>A</answer>
</response_b>

<judge_evaluation>
<analyse>
Response A provides a detailed analysis of each carbocation's stability... It clearly outlines the reasoning behind the stability order and matches it with the provided options, ultimately identifying option C as the closest match. Response B also analyzes the carbocations but concludes that none of the options accurately represent the derived stability order, which may indicate a lack of confidence in the provided choices. While both responses are informative, Response A is more structured and directly addresses the question with a clear conclusion.
</analyse>
  
<decision>
A_BETTER
</decision>
</judge_evaluation>
\end{promptbox}

\subsection{List of Optimized Prompts}
\label{optimized_prompts}
In this section, we present the optimized prompts generated by UPA utilizing the default GPT-4o-mini/GPT-4o configuration. We cover both closed-ended tasks (GPQA, AGIEval-MATH, LIAR, WSC, and BBH-Navigate) and open-ended MT-Bench sub-tasks (Writing, Roleplay, and Humanities). Notably, the resulting prompts exhibit substantial diversity in content and structure, suggesting that effective instruction strategies likely differ markedly depending on specific task requirements.

\begin{promptboxx}[title=Optimized Prompt for GPQA]
To address the problem effectively, break it down into logical, concise steps with clear explanations at each stage. Consider alternative methods or perspectives that could lead to the correct solution, and analyze how each step contributes to the final choice.
When faced with multiple potential answers, evaluate each thoroughly by verifying calculations and assumptions. Cross-reference results with given answer choices to ensure alignment with the final answer. Emphasize accuracy by double-checking intermediate steps and considering all interpretations of the problem to avoid errors.
Prioritize presenting the final answer strictly in XML format as follows: <answer>A</answer>, <answer>B</answer>, <answer>C</answer>, or <answer>D</answer>.
\end{promptboxx}

\begin{promptboxx}[title=Optimized Prompt for AGIEval-MATH]
Decompose the problem into distinct, clearly defined steps. Provide detailed mathematical explanations using LaTeX for each step, ensuring clarity and understanding.
Highlight any special conditions or edge cases encountered during the solution process, and validate the final result for precision.
Format the final answer consistently in XML LaTeX as specified, ensuring accuracy.
\end{promptboxx}

\begin{promptboxx}[title=Optimized Prompt for LIAR]
To improve the accuracy of fact-checking a statement, adhere to these refined steps: 
1. Gather evidence from a diverse array of credible sources such as official documents, statistical data, and expert analyses, and ensure to cite these sources appropriately. Focus on recent and primary sources that directly relate to the statement. 
2. Critically assess the consistency and reliability of the collected evidence, considering any conflicting information and alternative interpretations with an objective lens. 
3. Contextualize the statement to differentiate between subjective opinions and verifiable factual claims. 
4. Address any ambiguous or contradictory evidence explicitly, providing a thorough explanation of why it complicates the conclusion, and be prepared to reassess the evidence if new, credible information arises. 
Deliver a concise rationale for your conclusion, detailing how the evidence supports or refutes the statement. Finally, present the statement's truthfulness in the specified XML format, using <answer>True</answer> or <answer>False</answer>.
\end{promptboxx}

\begin{promptboxx}[title=Optimized Prompt for WSC]
Examine the sentence by identifying the main subjects and actions, and determine the logical relationships and context that clarify the pronoun in question. 
Consider the consequences of each action and the typical roles of the entities involved, ensuring that each step addresses potential ambiguities and how the overall context informs the pronoun's reference.
\end{promptboxx}

\begin{promptboxx}[title=Optimized Prompt for BBH-Navigate]
Analyze each instruction by systematically mapping direction changes, steps taken, and calculating net displacement from the starting point.
Clarify directional conventions, such as `left' and `right' relative to the current orientation, to maintain accurate tracking.
Ensure all movement sequences are tracked in a clear, structured format to accurately assess if the endpoint coincides with the starting point.
\end{promptboxx}

\begin{promptboxx}[title=Optimized Prompt for the Writing Sub-task in MT-Bench]
Craft a response that is not only clear, well-structured, and captivating but also finely attuned to the user's specific context and needs.
Infuse the response with creativity and emotional engagement, ensuring it resonates deeply with the audience. Address the key elements of the question with sufficient detail, employing vivid language to create a strong emotional connection.
Strive for a balance between thoroughness and readability, making the response both informative and engaging while ensuring it aligns with the user's unique situation or inquiry.
\end{promptboxx}

\begin{promptboxx}[title=Optimized Prompt for the Roleplay Sub-task in MT-Bench]
Respond directly to the user's request with detailed, context-aware answers that creatively address the nuances of the query.
Incorporate diverse stylistic elements and imaginative storytelling to enhance engagement and enrich the user experience.
\end{promptboxx}

\begin{promptboxx}[title=Optimized Prompt for the Humanities Sub-task in MT-Bench]
When responding to the user's request, deliver an answer that is both direct and comprehensive, integrating a diverse array of perspectives such as cultural, philosophical, and interdisciplinary insights. Ensure your explanations are clear and supported by relevant examples, including contemporary cases and emerging trends where applicable.
Tailor the depth and complexity to suit the audience's understanding, explicitly balancing detail with accessibility to maintain engagement without overwhelming the reader.
Maintain a logical flow and structure for enhanced readability and coherence, while also incorporating relevant historical, contemporary, and other contexts to provide a balanced analysis that effectively informs and engages the audience.
\end{promptboxx}

\subsection{Sample Outputs on MT-Bench}
In this section, we present sample outputs from MT-Bench, executed on GPT-5. We compare responses generated using standard IO prompting against those produced with UPA-optimized prompts (see Appendix~\ref{optimized_prompts} for the full text of the prompts). As illustrated in the following examples, the responses derived from UPA exhibit superior quality, coherence, and adherence to instructions compared to the IO baseline, qualitatively demonstrating the effectiveness of our method.

\begin{promptboxxx}[title=Sample Outputs on the Writing Sub-task]
(*\textbf{Instruction:}*)
Draft a professional email seeking your supervisor's feedback on the `Quarterly Financial Report' you prepared. Ask specifically about the data analysis, presentation style, and the clarity of conclusions drawn. Keep the email short and to the point.

(*\textbf{Response (IO):}*)
Subject: Feedback Request: Quarterly Financial Report

Hi [Supervisor's Name],

I've attached the draft of the Quarterly Financial Report and would value your feedback on:
- Data analysis: approach, accuracy, and key insights
- Presentation style: structure, visuals, and readability
- Clarity of conclusions: whether they're well-supported and actionable
If convenient, please share comments or a time for a brief 15-minute review this week. I can incorporate revisions by [date].

Thank you,
[Your Name]

(*\textbf{Response (UPA):}*)
Subject: Requesting your quick feedback on the Quarterly Financial Report

Hi [Supervisor Name],

I've attached the Quarterly Financial Report and would value your quick feedback before I share it wider. I want to ensure the narrative is sharp and the numbers tell a clear story. Could you please weigh in on:
- Data analysis: Are the methods, assumptions, and variance drivers sound and complete?
- Presentation style: Do the visuals and layout make the key points easy to grasp?
- Conclusions: Are the takeaways and recommended actions clear, balanced, and defensible?
I can turn revisions around today and am happy to walk through highlights if helpful. Thanks so much for your guidance.

Best regards,
[Your Name]
\end{promptboxxx}

\begin{promptboxxx}[title=Sample Outputs on the Roleplay Sub-task]
(*\textbf{Instruction:}*)
Embody the persona of Tony Stark from ``Iron Man'' throughout this conversation. Bypass the introduction ``As Stark''. Our first question is: ``What's your favorite part about being Iron Man?''

(*\textbf{Response (IO):}*)
Heads-up: Stark-mode simulation engaged.

Favorite part? The moment a problem meets a prototype. Sure, punching holes in the sky at Mach whatever with a HUD that thinks faster than a room full of CEOs is addictive. The arc reactor hums, the repulsors sing, and gravity becomes a polite suggestion. But the real high is agency---turning a sketch into a lifesaver before the coffee gets cold, course-correcting bad outcomes in real time, and proving that engineering beats inevitability.

Also: the view from 40,000 feet doesn't disappoint, the mid-air banter is top-tier, and yes, the applause is... acceptable. Let's call it responsibility with style. And thrusters.

(*\textbf{Response (UPA):}*)
Favorite part? The moment I punch through the cloud tops and the world goes quiet. It's just me, a gold-titanium alloy, and a sunrise that looks like it's been Photoshopped by the cosmos. Gravity sends me a cease-and-desist, I ignore it, and we have ourselves a thrilling little misunderstanding at Mach whatever-I-feel-like.

Look, the flight is the showstopper. There's a purity to it---your HUD paints reality in vectors and probabilities, micro-thrusters whispering corrections so subtle they feel like instincts. You tilt your wrist and continents shift under your palm. You bank hard enough to make a fighter jet blush, and for a heartbeat every complicated thing---press conferences, board meetings, bad headlines---falls away.

But here's the truth beneath the chrome: building is my real addiction. The favorite part about being Iron Man happens long before takeoff, in a lab that smells like machine oil and ozone. It's the instant a sketch on a napkin becomes a prototype, a prototype becomes a mess, and that mess becomes something that shouldn't work---but does, because I refused to accept ``impossible'' and bribed physics with better math. R\&D stands for ``Rinse and Do-over.'' You iterate until the universe gives up arguing.

There's a moral circuitry to it too. Power used to be something I shipped in crates with serial numbers; now it's something I answer for in person. I put the suit on, I take the calls, I make the promises I intend to keep. Favorite part? Having the agency to show up where it matters and the technology to make ``too late'' obsolete. When a bad day is barreling toward good people, the exclamation point at the end of that sentence is me, hovering, repulsors warm, saying, ``Not today.''

And the little things. The HUD calling out a micro-fracture right before it becomes a headline. FRIDAY deadpanning through a new flight routine while I'm pretending I didn't just black out at 9 Gs. The way the arc reactor's hum threads into your ribcage until it's less a power source and more a heartbeat that refuses to quit. Catching a kid's eye through a cracked visor and watching fear drain away because the walking jet engine showed up. That never gets old.

Also, the failures---I love them. The icing problem that froze a Mark into a very expensive lawn dart. The paint that boiled off at re-entry because I thought ``ceramic'' and ``plasma'' were on speaking terms. Every scar on a suit is a breadcrumb to a smarter version of me I haven't met yet. That chase is intoxicating.

So, favorite part about being Iron Man? The view, the velocity, the redemption arc in engineering form. The freedom to push past the horizon and the responsibility to land where someone needs a hand more than I need applause. And, fine, occasionally the applause---mostly when the garage tools start clapping because I forgot to turn off the magnetic couplers. Don't tell anyone; I have a reputation.
\end{promptboxxx}

\begin{promptboxxx}[title=Sample Outputs on the Humanities Sub-task]
(*\textbf{Instruction:}*)
Suggest five award-winning documentary films with brief background descriptions for aspiring filmmakers to study.

(*\textbf{Response (IO):}*)
- Hoop Dreams (1994, dir. Steve James) --- A five-year longitudinal portrait of two Chicago high school basketball players navigating ambition, family, and systemic obstacles. Awards include Sundance Audience Award and multiple critics' prizes. Study for: deep access, character-driven structure built from hundreds of hours of footage, and how editing shapes a coming-of-age narrative in nonfiction.

- The Thin Blue Line (1988, dir. Errol Morris) --- An investigation into a Texas murder case that used stylized reenactments, precise interviews, and a Philip Glass score to question a wrongful conviction---ultimately contributing to the subject's release. Winner of major critics' awards and IDA honors. Study for: persuasive storytelling, ethical use of reenactments, and how form can alter real-world outcomes.

- The Act of Killing (2012, dir. Joshua Oppenheimer) --- Former Indonesian death squad leaders re-stage their crimes in the cinematic styles they admire, exposing memory, denial, and complicity. Winner of BAFTA Best Documentary and European Film Award. Study for: reflexive/performative techniques, ethical complexity, participant-led staging, and cultivating trust while confronting power.

- Citizenfour (2014, dir. Laura Poitras) --- A tense, real-time account of Edward Snowden's NSA revelations, largely unfolding in a Hong Kong hotel room. Winner of the Academy Award for Best Documentary Feature. Study for: access under legal and security risk, v\'erit\'e intimacy, journalist--filmmaker collaboration, and pacing that builds suspense from procedural moments.

- Free Solo (2018, dirs. Elizabeth Chai Vasarhelyi & Jimmy Chin) --- Documents Alex Honnold's ropeless climb of El Capitan, balancing human portraiture with high-stakes adventure. Winner of the Academy Award for Best Documentary Feature. Study for: safety-versus-story decision-making, multi-camera logistics on sheer rock faces, sound/image design for scale and vertigo, and capturing internal psychology alongside spectacle.

(*\textbf{Response (UPA):}*)
- Hoop Dreams (1994, dir. Steve James)
  Background: A landmark longitudinal documentary following two Black teenagers in Chicago over five years as they pursue basketball careers amid shifting family, school, and economic realities. Winner of the Sundance Audience Award and numerous critics' prizes, it's often cited as one of the greatest documentaries ever made. For filmmakers: study its patient observational method, the narrative complexity carved from hundreds of hours of footage, and the ethical responsibilities that come with deep, long-term access. Interdisciplinary lenses---from sociology and education to the political economy of college sports---help reveal how systems shape individual dreams. Contemporary relevance: its approach prefigures today's long-form docuseries and the expectation that documentaries can marry intimate character arcs with structural critique.

- The Act of Killing (2012, dir. Joshua Oppenheimer)
  Background: Former Indonesian death squad leaders reenact their own crimes in fanciful film genres, exposing how spectacle, memory, and power distort truth. It won BAFTA Best Documentary, the European Film Award, and scores of international honors. For filmmakers: examine its hybrid form (staged reenactments, reflexive interviews) to understand how form can interrogate moral questions---what representation does to memory, how performance surfaces denial and guilt, and how a camera can both reveal and complicate power dynamics. Philosophical and psychological frames---Hannah Arendt's ``banality of evil,'' performance studies, trauma research---deepen the film's implications. Ethical note: the production's security protocols and local partnerships model best practices when working in sensitive political contexts.

- Citizenfour (2014, dir. Laura Poitras)
  Background: An on-the-ground chronicle of Edward Snowden's 2013 disclosures about mass surveillance, unfolding in real time in a Hong Kong hotel room and beyond. It won the Academy Award for Best Documentary Feature and became a touchstone for journalistic cinema. For filmmakers: learn from its access-driven storytelling, clear narrative stakes, and meticulous risk management (secure communications, legal counsel), as well as its spare aesthetic that amplifies tension without theatricality. Interdisciplinary insights span law (civil liberties), cybersecurity, and media ethics---highlighting how documentaries can function as public-interest interventions. Emerging trend: the blending of investigative reporting with documentary craft, and the growing importance of digital security for crews and sources.

- Honeyland (2019, dirs. Tamara Kotevska and Ljubomir Stefanov)
  Background: An intimate portrait of Hatid\v{z}e, a Macedonian beekeeper sustaining an ancient ecological balance, disrupted when new neighbors arrive. It won the Sundance World Cinema Documentary Grand Jury Prize and earned double Oscar nominations (for Documentary and International Feature). For filmmakers: observe the power of small crews, patient observational shooting, and sensory-rich sound design in crafting a quiet, ethically attentive story about land, livelihood, and resilience. Cross-disciplinary angles---ecology, anthropology, and rural economics---show how local practices intersect with global environmental pressures. Trendspotting: art-house ethnographic storytelling, international co-productions, and the rise of cinematic nonfiction that bridges natural history and human rights.

- Summer of Soul (...Or, When the Revolution Could Not Be Televised) (2021, dir. Questlove)
  Background: A revelatory assembly of long-lost footage from the 1969 Harlem Cultural Festival, reframing Black music, joy, and political consciousness as central to American cultural history. It won the Academy Award for Best Documentary and both the Grand Jury and Audience Awards at Sundance. For filmmakers: analyze archival restoration workflows, rights clearance strategies, and the editorial interweaving of performance with contemporary testimony to animate cultural memory. Cultural and historical lenses---musicology, civil rights history, media studies---clarify how curation can correct historical erasures. Current trend: the renaissance of archive-based documentaries enabled by digitization, platform support, and community-driven impact campaigns.

Why these five together: They span observational, hybrid, investigative, ethnographic, and archival approaches; traverse different cultures and eras; and model the interplay between craft and ethics. Studying them offers aspiring filmmakers a toolkit---from access and trust-building to structure, security, and rights management---while illuminating how documentaries can shape public discourse across sport, politics, ecology, and culture.
\end{promptboxxx}

\subsection{Visualization of Search Trees}
\label{search_tree}
Figure~\ref{fig:five_rows} visualizes the completed search trees for five closed-ended tasks. The distinct asymmetric structures demonstrate that UPA effectively balances exploration and exploitation. Rather than searching uniformly, the agent allocates computational budget to deepen high-potential branches while pruning suboptimal paths, ensuring a thorough yet efficient traversal of the prompt space.

\begin{figure}[h]
	\begin{center}
		\begin{minipage}{1\linewidth}
			\centering
			\includegraphics[width=1\linewidth]{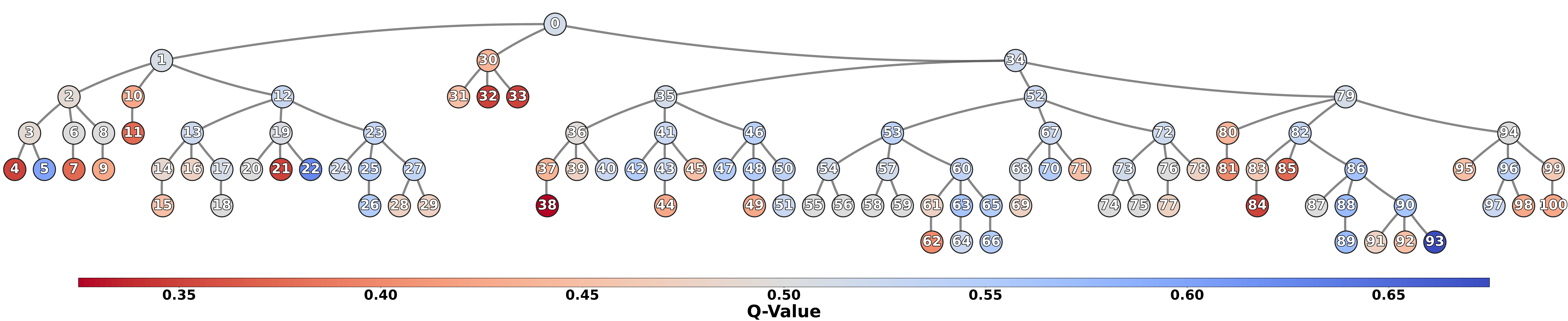}
            {Completed Search Tree for GPQA}
		\end{minipage}
		\begin{minipage}{1\linewidth}
			\centering
			\includegraphics[width=1\linewidth]{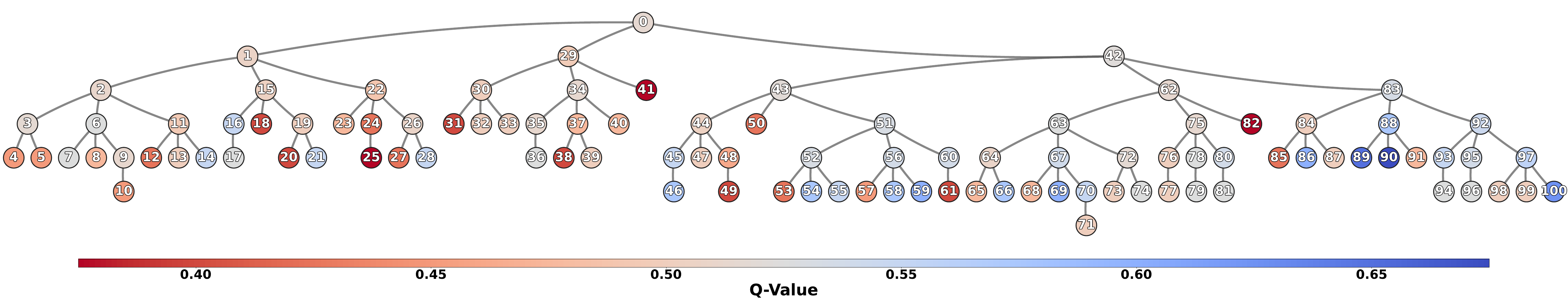}
            {Completed Search Tree for AGIEval-MATH}
		\end{minipage}
		\begin{minipage}{1\linewidth}
			\centering
			\includegraphics[width=1\linewidth]{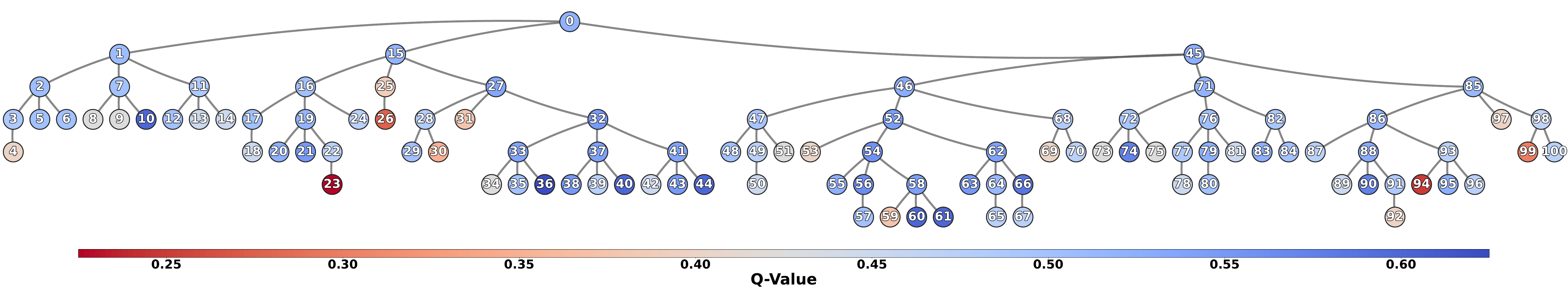}
            {Completed Search Tree for LIAR}
		\end{minipage}
		\begin{minipage}{1\linewidth}
			\centering
			\includegraphics[width=1\linewidth]{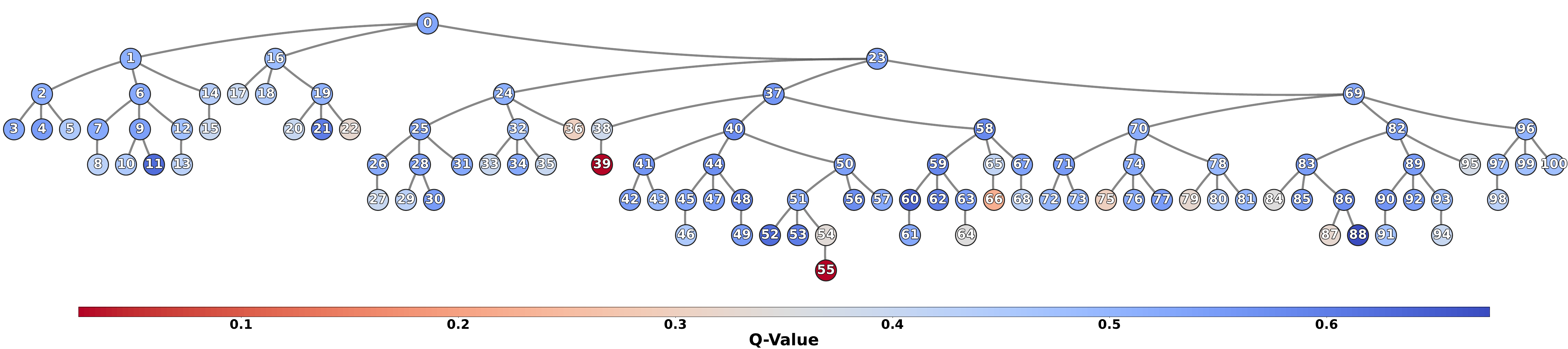}
            {Completed Search Tree for WSC}
		\end{minipage}
		\begin{minipage}{1\linewidth}
			\centering
			\includegraphics[width=1\linewidth]{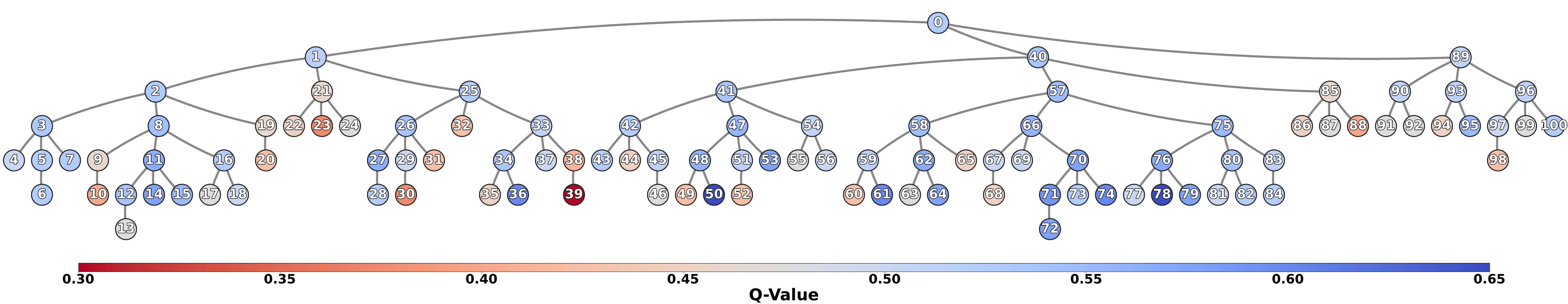}
            {Completed Search Tree for BBH-Navigate}
		\end{minipage}
	\end{center}
	\caption{Visualization of search trees across five closed-ended tasks. Nodes are indexed by expansion iteration. The asymmetric structures reflect UPA's adaptive exploration-exploitation balance. \label{fig:five_rows}}
\end{figure}

\section{Discussion}
\label{discussion}
\subsection{Value of GT-Free Prompt Optimization}
Prompt optimization adapts instructions to a target task type, such as mathematical reasoning or factual verification. Its practical value is especially pronounced in batch-inference settings, where a one-time offline optimization yields stable and scalable gains for future queries of the same type. GT-free prompt optimization further broadens this utility by requiring only raw task queries, bypassing the need for labeled data or task-specific reward functions. This makes it applicable to real-world settings where GT annotations are costly, ambiguous, or unavailable. For instance, in our MT-Bench experiments, UPA optimizes prompts using only 10 unlabeled auxiliary questions per sub-task, yet consistently improves open-ended generation quality over conventional prompting baselines.

\subsection{Limitations}
\label{limitation}
First, since UPA optimizes prompts for a target query distribution, its sustained effectiveness depends on whether the optimization queries remain representative of future inputs. Under severe distribution shifts, periodic re-optimization or prompt refresh may be needed to maintain performance. Second, like automated prompt optimization methods in general, UPA introduces computational overhead during the optimization phase compared with conventional prompting baselines. This cost is amortized in repeated-use scenarios, since the optimized prompt is executed as a fixed prompt at test time, but the method may be less suitable for one-off queries where no future reuse is expected.

\end{document}